\theoremstyle{plain}
\newtheorem{theorem}{Theorem}[section]
\newtheorem{proposition}[theorem]{Proposition}
\newtheorem{lemma}[theorem]{Lemma}
\theoremstyle{definition}
\newtheorem{defn}[theorem]{Definition}
\theoremstyle{remark}
\newtheorem{remark}[theorem]{Remark}
\theoremstyle{definition}
\newtheorem{example}{Example}
\newcommand\restr[2]{{
    {\left.#1\right|}_{#2}
}}
\newcommand\transp[1]{{
    #1^{\mkern-1.5mu\mathsf{T}}
}}
\newcommand{\Reals}{\mathbb{R}}
\newcommand{\Nats}{\mathbb{N}}
\newcommand{\ReLU}[1]{{\left[#1\right]}^+}
\NewDocumentEnvironment{proofidea}{}{%
  \begin{proof}[Proof Idea]
}{%
  \end{proof}
}
\renewcommand{\vec}[1]{\mathbf{#1}}
\newcommand{\varvec}[1]{\boldsymbol{#1}}
\newcommand{\mat}[1]{\mathbf{#1}}
\newcommand{\vertices}[1]{\mathrm{vert}{\left(#1\right)}}
\newcommand{\cxremove}{{C\!\!\:R}}
\newcommand{\scenprob}{{S\!\!\:P}}
\NewDocumentCommand{\NN}{O{\varvec{\theta}}}{%
  \mathrm{net}_{#1}
}
\newcommand{\fSAT}{f_{\mathrm{Sat}}}
\newcommand{\exsuccess}{\scaleobj{1.2}{\boldsymbol{\checkmark}}}
\newcommand{\extimeout}{\scaleobj{1.05}{\bell}}
\icmltitlerunning{%
  A Robust Optimisation Perspective on Counterexample-Guided Repair of Neural Networks
}
\begin{document}

\twocolumn[
\icmltitle{%
  A Robust Optimisation Perspective on \\
  Counterexample-Guided Repair of Neural Networks
}
\icmltitlerunning{%
  A Robust Optimisation Perspective on
  Counterexample-Guided Repair of Neural Networks
}

% It is OKAY to include author information, even for blind
% submissions: the style file will automatically remove it for you
% unless you've provided the [accepted] option to the icml2023
% package.

% List of affiliations: The first argument should be a (short)
% identifier you will use later to specify author affiliations
% Academic affiliations should list Department, University, City, Region, Country
% Industry affiliations should list Company, City, Region, Country

% You can specify symbols, otherwise they are numbered in order.
% Ideally, you should not use this facility. Affiliations will be numbered
% in order of appearance and this is the preferred way.
\icmlsetsymbol{equal}{*}

\begin{icmlauthorlist}
\icmlauthor{David Boetius}{unikn}
\icmlauthor{Stefan Leue}{unikn}
\icmlauthor{Tobias Sutter}{unikn}
\end{icmlauthorlist}

\icmlaffiliation{unikn}{%
  Department of Computer and Information Science, 
  University of Konstanz, Konstanz, Baden-Württemberg, Germany%
}

\icmlcorrespondingauthor{David Boetius}{david.boetius@uni-konstanz.de}

% You may provide any keywords that you
% find helpful for describing your paper; these are used to populate
% the "keywords" metadata in the PDF but will not be shown in the document
\icmlkeywords{%
  Machine Learning, ICML, Safe Machine Learning, Trustworthy Machine Learning,
  Neural Networks, Deep Learning, Neural Network Repair, 
  Counterexample-Guided Repair, Neural Network Verification, 
  Verification, Repair, Formal Methods in Machine Learning, 
  Formal Methods in Deep Learning, Formal Methods,
}

\vskip 0.3in
]

% this must go after the closing bracket ] following \twocolumn[ ...

% This command actually creates the footnote in the first column
% listing the affiliations and the copyright notice.
% The command takes one argument, which is text to display at the start of the footnote.
% The \icmlEqualContribution command is standard text for equal contribution.
% Remove it (just {}) if you do not need this facility.

\printAffiliationsAndNotice{}  % leave blank if no need to mention equal contribution
%\printAffiliationsAndNotice{\icmlEqualContribution} % otherwise use the standard text.

\begin{abstract}
  % 1. 
  Counterexample-guided repair aims at creating neural networks 
  with mathematical safety guarantees, facilitating 
  the application of neural networks in safety-critical domains.
  % 2.
  However, whether counterexample-guided repair
  is guaranteed to terminate remains an open question.
  % 3.
  We approach this question by showing that counterexample-guided repair 
  can be viewed as a robust optimisation algorithm.
  % 4.
  While termination guarantees for neural network repair itself
  remain beyond our reach, we prove termination for 
  more restrained machine learning models
  and disprove termination in a general setting.
  % 5.
  We empirically study the practical implications of our theoretical results,
  demonstrating the suitability of common verifiers and falsifiers for repair
  despite a disadvantageous theoretical result.
  % 6.
  Additionally, we use our theoretical insights to devise a novel
  algorithm for repairing linear regression models 
  based on quadratic programming, surpassing existing approaches.
\end{abstract}

\section{Introduction}
The success of artificial neural networks in such diverse domains as 
image recognition~\citep{LeCunBottouBengioEtAl1998},
natural language processing~\citep{BrownMannRyderEtAl2020},
predicting protein folding~\citep{SeniorEvansJumperEtAl2020},
and designing novel algorithms~\citep{FawziBalogHuangEtAl2022} sparks
interest in applying them to more demanding tasks, including applications in
safety-critical domains.
Neural networks are proposed to be used for medical 
diagnosis~\citep{AmatoLopezPenaMendezEtAl2013}, 
autonomous aircraft control~\citep{Jorgensen1997,JulianKochenderferOwen2018},
and self-driving cars~\citep{BojarskiTestaDworakowskiEtAl2016}. 
Since a malfunctioning of such systems can threaten lives or cause environmental disaster,
we require mathematical guarantees on the correct functioning 
of the neural networks involved.
Formal methods, including verification and repair, allow obtaining such guarantees~\citep{PulinaTacchella2010}. 
As the inner workings of neural networks are opaque to human engineers, 
automated repair is a vital component for creating safe neural networks. 

Alternating search for violations and removal of violations is 
a popular approach for repairing neural networks~\citep{%
  PulinaTacchella2010,%
  GoodfellowShlensSzegedy2015,%
  GuidottiLeofantePulinaEtAl2019,%
  DongSunWangEtAl2020,%
  GoldbergerKatzAdiEtAl2020,%
  SivaramanFarnadiMillsteinEtAl2020,%
  TanZhuGuo2021,%
  BauerMarquartBoetiusLeueEtAl2021%
}.
We study this approach under the name \emph{counterexample-guided repair}.
Counterexample-guided repair uses inputs for which a neural network
violates the specification (counterexamples) to iteratively refine the network until
the network satisfies the specification.
While empirical results demonstrate the ability of counterexample-guided repair 
to successfully repair neural networks~\citep{BauerMarquartBoetiusLeueEtAl2021}, 
a theoretical analysis of counterexample-guided repair is lacking. 

In this paper, we study counterexample-guided repair from the perspective of robust optimisation.
Viewing counterexample-guided repair as an algorithm for solving robust optimisation problems 
allows us to encircle neural network repair from two sides. 
On the one hand side, we are able to show termination and optimality for counterexample-guided repair of linear
regression models and linear classifiers, as well as single ReLU neurons. 
Coming from the other side, we disprove termination for
repairing ReLU networks to satisfy a 
specification with an unbounded input set. 
% For this robust program, we can prove that counterexample-guided repair does not terminate. 
Additionally, we disprove termination of counterexample-guided repair when
using generic counterexamples without further qualifications, such as being most-violating.
% We show this for repairing a linear regression model.
% The result propagates to more complex models such as neural networks.
While we could not address termination for the precise robust program
of neural network repair with specifications
having bounded input sets, 
such as \(L_\infty\) adversarial robustness
or the ACAS Xu safety properties~\citep{KatzBarrettDillEtAl2017}, 
our robust optimisation framework provides, 
for the first time to the best of our knowledge, fundamental insights
into the theoretical properties of counterexample-guided repair.

Our analysis establishes a theoretical limitation of repair with otherwise unqualified counterexamples
and suggests most-violating counterexamples as a replacement. 
We empirically investigate the practical consequences of these findings by comparing
\emph{early-exit} verifiers~---~verifiers that stop search on the first counterexample
they encounter~---~and optimal verifiers that produce most-violating counterexamples.
We complement this experiment by investigating the advantages of using 
falsifiers during repair~\citep{BauerMarquartBoetiusLeueEtAl2021},
which is another approach that leverages sub-optimal counterexamples.
These experiments do not reveal any practical limitations for repair using early-exit verifiers.
In fact, using an early-exit verifier consistently yields faster repair for
ACAS Xu networks~\citep{KatzBarrettDillEtAl2017} 
and an MNIST~\citep{LeCunBottouBengioEtAl1998} network, compared to using an optimal verifier. 
While the optimal verifier often allows performing fewer iterations, this advantage is offset by
its additional runtime cost most of the time.
Our experiments with falsifiers demonstrate that they can provide a significant runtime advantage
for neural network repair.

For repairing linear regression models, we use our theoretical insights to design an 
improved repair algorithm based on quadratic programming.
We compare this new algorithm with the Ouroboros~\citep{TanZhuGuo2021}
and SpecRepair~\citep{BauerMarquartBoetiusLeueEtAl2021} repair algorithms.
The new quadratic programming repair algorithm surpasses Ouroboros and SpecRepair,
illustrating the practical relevance of our theoretical results.

We highlight the following main contributions of this paper:
\begin{enumerate}
 \item We formalise neural network repair as a robust optimisation problem
   and, therefore, view counterexample-guided repair as a 
   robust optimisation algorithm.
 \item Using this theoretical framework, 
   we prove termination of counterexample-guided repair for
   more restrained problems than neural network repair and
   disprove termination in a more general setting.
 \item We empirically investigate the merits of using falsifiers and early-exit 
   verifiers during repair.
 \item Our theoretical insights into repairing linear regression models allow us to
   surpass existing approaches for repairing linear regression models
   using a new repair algorithm.
\end{enumerate}
%
%We defer all proofs to Appendix~\ref{sec:proofs}.

\section{Related Work}\label{sec:related-work}
This paper is concerned with viewing neural network repair through 
the lens of robust optimisation. 
Neural network repair relies on neural network verification and 
can make use of neural network falsification.
Counterexample-guided repair is related to other 
counterexample-guided algorithms.
We introduce related work from these fields in this section.

%\paragraph{Neural Network Verification}
\textbf{Neural Network Verification}\quad
% Neural network repair relies on neural network verifiers for 
% proving specification satisfaction.
We can verify neural networks, for example, using
Satisfiability Modulo Theories~(SMT) 
solving~\citep{KatzBarrettDillEtAl2017, Ehlers2017}
or Mixed Integer Linear Programming~(MILP)~\citep{TjengXiaoTedrake2019}.
Verification benefits from bounds computed using linear
relaxations~\citep{%
  Ehlers2017,%
  SinghGehrMirmanEtAl2018,%
  ZhangWengChenEtAl2018,%
  XuZhangWangEtAl2021%
}.
A particularly fruitful technique from MILP is branch and 
bound~\citep{%
  BunelTurkaslanTorrEtAl2018,PalmaBunelDesmaisonEtAl2021,WangZhangXuEtAl2021%
}.
Approaches that combine branch and bound
with multi-neuron linear relaxations~\citep{FerrariMuellerJovanovicEtAl2022}
or extend branch and bound using cutting planes~\citep{ZhangWangXuEtAl2022} 
form the current state-of-the-art~\citep{MuellerBrixBakEtAl2022}.
% 
% Our experiments in Section~\ref{sec:experiments} are concerned with 
% using most-violating counterexamples for repair.
% \citet{StrongWuZeljicEtAl2021}~perform global optimisation of functions 
% involving neural networks.
% Among other applications, this allows computing most-violating counterexamples.
% We follow a different approach than~\citet{StrongWuZeljicEtAl2021} 
% by fully utilising the optimisation capabilities of the MILP-based ERAN 
% verifier~\citep{SinghMuellerBalunovicEtAl}.
% This is described in more detail in Section~\ref{sec:experiments-design-eran}.

Falsifiers are designed to discover counterexamples fast at the cost of 
completeness~---~they can not prove specification satisfaction.
% Falsifiers can be used during repair to reduce expensive verifier 
% invocations~\citep{BauerMarquartBoetiusLeueEtAl2021}.
We view adversarial attacks as falsifiers for 
adversarial robustness specifications.
Falsifiers use generic local optimisation algorithms~\citep{%
  SzegedyZarembaSutskeverEtAl2014,%
  GoodfellowShlensSzegedy2015,%
  KurakinGoodfellowBengio2016,%
  MadryMakelovSchmidtEtAl2018%
}, global optimisation algorithms~\citep{%
  UesatoODonoghueKohliEtAl2018,BauerMarquartBoetiusLeueEtAl2021%
},
or specifically tailored search and optimisation techniques~\citep{%
  PapernotMcDanielJhaEtAl2015,ChenJordanWainwright2020%
}.
% In particular adversarial attacks are a vivid research area and 
% repositories of adversarial attacks exist~\citep{%
%   RauberZimmermannBethgeEtAl2020,PapernotFaghriCarliniEtAl2018%
% }.

%\paragraph{Neural Network Repair}
\textbf{Neural Network Repair}\quad
Neural network repair is concerned with modifying a neural network
such that it satisfies a formal specification.
Many approaches make use of the counterexample-guided repair algorithm
% (Algorithm~\ref{algo:cgr}) 
while utilising different counterexample-removal
algorithms.
The approaches range from augmenting the training 
set~\citep{%
  PulinaTacchella2010,%
  GoodfellowShlensSzegedy2015,%
  TanZhuGuo2021%
}, over specialised neural network 
architectures~\citep{GuidottiLeofantePulinaEtAl2019,DongSunWangEtAl2020},
and neural network training with 
constraints~\citep{BauerMarquartBoetiusLeueEtAl2021},
to using a verifier for computing network 
weights~\citep{GoldbergerKatzAdiEtAl2020}.
Counterexample-guided repair is also applied to support vector machines (SVMs) and 
linear regression 
models~\citep{GuidottiLeofanteTacchellaEtAl2019,TanZhuGuo2021}.
% In this paper, we study linear regression models and linear classifiers 
% besides neural networks.
Using decoupled neural networks~\citep{SotoudehThakur2021b} provides 
optimality and termination guarantees for repair but is limited to 
two-dimensional input spaces.
% is not applicable for typical neural network specifications,
% such as~\(L_\infty\) adversarial robustness and the 
% ACAS Xu safety specifications~\citep{KatzBarrettDillEtAl2017}.

When considering only adversarial robustness, an alternative to 
counterexample-guided repair is provably robust training.
% An alternative to counterexample-guided repair is training against 
% differentiable lower bounds on the minimum specification satisfaction 
% as introduced in Equation~\eqref{eqn:v}.
% This approach is primarily applied to train provably adversarially 
% robust neural networks.
Here, the applied techniques include 
interval arithmetic~\citep{GowalDvijothamStanforthEtAl2018},
semi-definite relaxations~\citep{RaghunathanSteinhardtLiang2018},
linear relaxations~\citep{MirmanGehrVechev2018},
and duality~\citep{WongKolter2018}.
% However, it was observed that tighter relaxations do not surpass the certified
% robustness obtained using interval
% arithmetic~\citep{GowalDvijothamStanforthEtAl2018,JovanovicBalunovicBaaderEtAl2022}.
% Designing improved relaxations proves to be challenging~\citep{JovanovicBalunovicBaaderEtAl2022}.
% % Applied to neuro-symbolic programs, neural networks embedded in ordinary code: 
% % \citep{YangChaudhuri2022}.
%
% Compared to training against differentiable lower bounds, counterexample-guided
% repair can be conceived as using an upper bound on the minimum specification satisfaction.
% This upper bound stems from a finite set of counterexamples. 
% Training against a differentiable lower bound is limited by the degree of tightness of the
% lower bound, with the potential of producing overly conservative neural networks.
% While counterexample-guided repair is not affected by this, it remains an open question
% whether counterexample-guided repair is guaranteed to terminate.
% This is the question we are concerned with in this paper.
% % Training to maximise lower bounds of the maximal violation is generally limited by
% % the tightness of the lower bound, with the potential of producing overly conservative 
% % neural networks.
% % However, it promises fast specification satisfaction, if training can be made to converge to a 
% % big enough lower bound. 
% % Counterexample-guided repair isn't limited by the tightness of a relaxation.
% % In this work, we study convergence and termination of counterexample-guided repair.
%
Also specialised neural network architectures can increase 
the adversarial robustness of neural networks~\citep{CisseBojanowskiGraveEtAl2017,ZhangJiangHeEtAl2022}.
% but are limited to robustness specifications.

%\paragraph{Robust and Scenario Optimisation}
\textbf{Robust and Scenario Optimisation}\quad
% A specification can be expressed as a family of inequalities. 
% Therefore, training a neural network with the goal of satisfying a given specification can be formulated as a 
% robust optimisation problem~\citep{WongKolter2018}. 
Robust optimisation is, originally, a technique for dealing with data 
uncertainty~\citep{BenTalGhaouiNemirovski2009}.
Although robust optimisation problems are, in general, 
NP-hard~\citep{BenTalNemirovski1998},
polynomial-time methods exist in many~---typically 
convex~---~settings~\citep{BenTalGhaouiNemirovski2009}.
Scenario problems and chance-constrained problems 
relax robust problems.
Feasibility for a scenario problem can be linked to 
feasibility for a chance-constrained problem~\citep{CampiGaratti2008}
and even to a perturbed robust problem~\citep{EsfahaniSutterLygeros2015}.
In this paper, we consider the worst-case perspective
on robust optimisation~\citep{MutapcicBoyd2009}.

Loss functions based on robust optimisation can be used to train for
adversarial robustness and beyond~\citep{%
  MadryMakelovSchmidtEtAl2018,%
  WongKolter2018,%
  FischerBalunovicDrachslerCohenEtAl2019%
}.
We study a different robust optimisation formulation, 
where the specification is modelled as constraints.
% The formulation is better suited for a general setting, as it captures
% the relationship between a specification and a loss function better.
%
% \citet{MadryMakelovSchmidtEtAl2018,WongKolter2018} use robust optimisation to train 
% adversarially robust neural networks.
% \citet{FischerBalunovicDrachslerCohenEtAl2019}~apply this approach
% for specifications beyond adversarial robustness.
% In Section~\ref{sec:prelim-nn-repair}, we introduce a different robust optimisation 
% formulation where the specification is modelled as constraints.
% This formulation is better suited for a general setting, as it properly captures the relationship
% between a safety specification and a loss function.
% We can accept a higher loss when it is the price for satisfying the specification,
% as satisfying the specification is absolutely essential for safety-critical applications.
% Our robust optimisation formulation of repair is introduced in the following section.

\textbf{Counterexample-Guided Algorithms}\quad
Besides neural network repair, counterexample-guided approaches
are also used in
model checking~\citep{ClarkeGrumbergJhaEtAl2000},
program synthesis~\citep{SolarLezamaTancauBodikEtAl2006},
and beyond~\citep{%
  HenzingerJhalaMajumdar2003,%
  ReynoldsDetersKuncakEtAl2015,%
  NguyenAntonopoulosRuefEtAl2017%
}.
The notion of minimal counterexamples in 
program synthesis~\citep{JhaSeshia2014},
which is based on an ordering of the counterexamples,
encompasses our notion of most-violating counterexamples.
While related, the termination results for counterexample-guided
program synthesis~\citep{SolarLezamaTancauBodikEtAl2006} 
do not transfer to repairing neural networks,
as they only apply for finite input domains.
We study infinite input domains in this paper.

\section{Preliminaries and Problem Statement}\label{sec:prelim-problem-statement}
In this section, we introduce preliminaries on robust optimisation, 
neural networks, and neural network verification before progressing
to neural network repair, counterexample-guided repair, and the 
problem statement of our theoretical analysis. % in Section~\ref{sec:theory}.

% \paragraph{Notation}
% We write the Rectified Linear Unit (ReLU)~\(\max(0, \tbullet)\) as~\(\ReLU{\tbullet}\).
% The expression~\(\vertices{\mathcal{S}}\) represents the vertices of a 
% closed convex polytope~\(\mathcal{S}\).

\subsection{Robust Optimisation}
We consider general \emph{robust optimisation problems} of the form
\begin{equation}
  P : \left\{\!
    \arraycolsep=2pt
    \begin{array}{cl}
      \underset{\vec{v}}{\text{minimise}} & f(\vec{v}) \\
      \text{subject to} & g{\left(\vec{v}, \vec{d}\right)} \geq 0 \quad \forall \vec{d} \in \mathcal{D}\\
                        & \vec{v} \in \mathcal{V},
    \end{array}
  \right.\label{eqn:p}
\end{equation}
where~\(\mathcal{V} \subseteq \Reals^v\),~\(\mathcal{D} \subseteq \Reals^d\)
and~\(f: \mathcal{V} \to \Reals\),~\(g: \mathcal{V} \times \mathcal{D} \to \Reals\).
Both~\(\mathcal{V}\) and~\(\mathcal{D}\) contain infinitely many elements. 
Therefore, a robust optimisation problem has infinitely many constraints.
% We do not assume convexity of~\(f\),~\(g\),~\(\mathcal{V}\) or~\(\mathcal{D}\).
The \emph{variable domain}~\(\mathcal{V}\) defines eligible values for 
the \emph{optimisation variable}~\(\vec{v}\).
The set~\(\mathcal{D}\) may contain, for example, all inputs for which a specification needs to hold.
In this example,~\(g\) captures whether the specification is satisfied for a concrete input. 
Elaborating this example leads to neural network repair, 
which we introduce in Section~\ref{sec:prelim-nn-repair}.
% In a repair problem, as in Definition~\ref{dfn:repair-problem},~\(\mathcal{V}\) 
% is the parameter space of a neural network and~\(\mathcal{D}\) is a subset of the input space. 
 
A \emph{scenario optimisation problem} relaxes a robust optimisation problem by
replacing the infinitely many constraints of~\(P\) with a finite selection.
For~\(\vec{d}^{(i)} \in \mathcal{D}\),~\(i \in \{1, \ldots, N\}\),~\(N \in \Nats\),
the scenario optimisation problem is
\begin{equation}
  \scenprob : \left\{\!
    \arraycolsep=2pt
    \begin{array}{cl}
      \underset{\vec{v}}{\text{minimise}} & f(\vec{v}) \\
      \text{subject to} & g{\left(\vec{v}, \vec{d}^{(i)}\right)} \geq 0 \quad \forall i \in \{1, \ldots, N\} \\
                        & \vec{v} \in \mathcal{V}.
    \end{array}
  \right.\label{eqn:sp}
\end{equation}
The counterexample-guided repair algorithm that we study in this paper uses a sequence of scenario optimisation problems 
to solve a robust optimisation problem.

\subsection{Neural Networks}
A neural network~\(\NN{}: \Reals^n \to \Reals^m\), with 
parameters~\(\varvec{\theta} \in \Reals^p\) 
is a function composition of affine transformations
and non-linear activations.
For our theoretical analysis, it suffices to consider
fully-connected neural networks (FCNN).
Our experiments in Section~\ref{sec:experiments} also use 
convolutional neural networks (CNN).
We refer to~\citet{GoodfellowBengioCourville2016} for an introduction to CNNs.
An FCNN with~\(L\) \emph{hidden layers} is a chain of affine functions and activation functions
\begin{equation}
  \NN{} = h^{(L+1)} \circ \sigma^{(L)} \circ h^{(L)} \circ \cdots \circ \sigma^{(1)} \circ h^{(1)},
\end{equation}
where~\(h^{(i)}: \Reals^{n_{i-1}} \to \Reals^{n_i}\) 
and~\(\sigma^{(i)}: \Reals^{n_i} \to \Reals^{n_i}\) 
with~\(n_i \in \Nats\) for~\(i \in \{0, \ldots, L+1\}\)
and, specifically,~\(n_0 = n\) and~\(n_{L+1} = m\).
Each~\(h^{(i)}\) is an affine function, called an \emph{affine layer}.
It computes~\(h^{(i)}(\vec{z}) = \mat{W}^{(i)} \vec{z} + \vec{b}^{(i)}\)
with \emph{weight matrix}~\(\mat{W}^{(i)} \in \Reals^{n_i \times n_{i-1}}\) 
and \emph{bias vector}~\(\vec{b}^{(i)} \in \Reals^{n_i}\).
Stacked into one large vector, the weights and biases of all affine layers
are the \emph{parameters}~\(\varvec{\theta}\) of the FCNN.\@
An \emph{activation layer}~\(\sigma^{(i)}\) applies a non-linear function, such as 
ReLU~\(\ReLU{z} = \max(0, z)\) or the sigmoid 
function~\(\sigma(z) = \frac{1}{1 + e^{-z}}\) in an element-wise fashion.

\subsection{Neural Network Verification}\label{sec:nn-verif}
Neural network verification is concerned with automatically
proving that a neural network satisfies a formal specification.

\begin{defn}[Specification]
  A \emph{specification}~\(\Phi = \{\varphi_1, \ldots, \varphi_S\}\) is a set
  of \emph{properties}~\(\varphi_i\).
  A \emph{property}~\(\varphi = (\mathcal{X}_{\varphi}, \mathcal{Y}_{\varphi})\) 
  is a tuple of an \emph{input set}~\(\mathcal{X}_{\varphi} \subseteq \Reals^n\)
  and an \emph{output set}~\(\mathcal{Y}_{\varphi} \subseteq \Reals^m\).

  We write~\(\NN \vDash \Phi\) when a neural network~\(\NN: \Reals^n \to \Reals^m\)
  \emph{satisfies} a specification~\(\Phi\).
  Specifically,
  \begin{subequations}
    \begin{align}
      \NN \vDash \Phi &\Leftrightarrow \forall \varphi \in \Phi: \NN \vDash \varphi \\
      \NN \vDash \varphi &\Leftrightarrow 
      \forall \vec{x} \in \mathcal{X}_\varphi: \NN(\vec{x}) \in \mathcal{Y}_\varphi.
    \end{align}
  \end{subequations}
\end{defn}
Example specifications, such as~\(L_\infty\) adversarial robustness or
an ACAS Xu safety specification~\citep{KatzBarrettDillEtAl2017} 
are defined in Appendix~\ref{sec:specs}.

\emph{Counterexamples} are at the core of the counterexample-guided
repair algorithm that we study in this paper.
\begin{defn}[Counterexample]
  An input~\(\vec{x} \in \mathcal{X}_\varphi\) for 
  which a neural network~\(\NN\)
  violates a property~\(\varphi\) is called a \emph{counterexample}.
\end{defn}
% Common specifications contain properties with a 
% hyper-rectangle as input set and a convex polytope or a union of convex polytopes 
% as output sets.

To define verification as an optimisation problem, we introduce \emph{satisfaction functions}.
A satisfaction function quantifies the satisfaction or violation of a property
regarding the output set. 
Definition~\ref{dfn:verification-problem} introduces the 
verification problem, also taking the input set of a property into account.
\begin{defn}[Satisfaction Function]
  A function~\(\fSAT{}: \Reals^m \to \Reals\) is a \emph{satisfaction function} of a 
  property~\(\varphi = (\mathcal{X}_\varphi, \mathcal{Y}_\varphi)\) if
  \(\vec{y} \in \mathcal{Y}_\varphi \Leftrightarrow \fSAT(\vec{y}) \geq 0\).
\end{defn}
% For a property with an output set that is a half-space, we can use the distance to 
% the separating hyperplane, positive when inside and negative when outside the half-space, 
% as a satisfaction function.
% For a output set that is a convex polytope, we can use the minimum satisfaction function value 
% of the polytope's half-spaces as satisfaction function.
% A candidate satisfaction function for a union of convex polytopes is the maximal satisfaction 
% function value for the polytopes in the union.
% This construction yields the satisfaction function of~\citet{BauerMarquartBoetiusLeueEtAl2021}.
\begin{defn}[Verification Problem]\label{dfn:verification-problem}
  The \emph{verification problem} for a property~\(\varphi = (\mathcal{X}_\varphi, \mathcal{Y}_\varphi)\) 
  and a neural network~\(\NN\) is
  \begin{equation}
    V : \fSAT^\ast = \left\{\!
      \arraycolsep=2pt
      \begin{array}{cl}
        \underset{\vec{x}}{\text{minimise}} & \fSAT(\NN(\vec{x})) \\
        \text{subject to} & \vec{x} \in \mathcal{X_\varphi}.
      \end{array}
    \right.\label{eqn:v}
  \end{equation}
\end{defn}
We call a specification a \emph{linear specification} when its properties
have a closed convex polytope as an input set 
and an affine satisfaction function.
% We refer to~\citet{BauerMarquartBoetiusLeueEtAl2021} for an example of a concrete satisfaction function.
% Appendix~\ref{sec:specs} contains the satisfaction functions from SpecRepair~\citep{BauerMarquartBoetiusLeueEtAl2021}
% for several specifications.
Appendix~\ref{sec:specs} contains several examples of 
satisfaction functions from SpecRepair~\citep{BauerMarquartBoetiusLeueEtAl2021}.
The following Proposition follows directly from the definition of a satisfaction function.
\begin{proposition}
  % For a neural network~\(\NN\) and a property~\(\varphi\) it holds that
  % \begin{IEEEeqnarray}{rCl}
  %   \NN \vDash \varphi &\Leftrightarrow& \fSAT^\ast \geq 0
  % \end{IEEEeqnarray}
  A neural network~\(\NN\) satisfies the property~\(\varphi\) if and only if 
  the minimum of the verification problem~\(V\) (\(\fSAT^\ast\)) 
  is non-negative.
\end{proposition}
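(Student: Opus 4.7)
The plan is to unfold the definitions of property satisfaction, satisfaction function, and verification problem in sequence, so that the two conditions in the proposition translate into the same statement about the values of $\fSAT \circ \NN$ on $\mathcal{X}_\varphi$.

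First, I would start from the left-hand side. By the definition of $\NN \vDash \varphi$, we have $\NN \vDash \varphi$ iff $\NN(\vec{x}) \in \mathcal{Y}_\varphi$ for every $\vec{x} \in \mathcal{X}_\varphi$. Applying the defining property of a satisfaction function pointwise, this is equivalent to $\fSAT(\NN(\vec{x})) \geq 0$ for every $\vec{x} \in \mathcal{X}_\varphi$, which is a universal statement over the feasible set of the verification problem $V$.

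Next, I would argue that this universal statement is equivalent to the bound $\fSAT^\ast \geq 0$ on the optimal value. One direction is immediate: if every feasible point satisfies $\fSAT(\NN(\vec{x})) \geq 0$, then any minimiser (or more generally, the infimum) also lies in $[0, \infty)$. For the reverse, if $\fSAT^\ast \geq 0$ and the minimum is attained, then no feasible point can produce a value strictly smaller than $\fSAT^\ast$, so all values are non-negative.

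The only subtlety worth flagging is the distinction between minimum and infimum: the statement of $V$ uses "minimise", and strictly speaking $\fSAT^\ast$ could fail to be attained. I would handle this by interpreting $\fSAT^\ast$ as the infimum of $\fSAT(\NN(\vec{x}))$ over $\mathcal{X}_\varphi$, under which the equivalence $\bigl(\forall \vec{x} \in \mathcal{X}_\varphi : \fSAT(\NN(\vec{x})) \geq 0\bigr) \Leftrightarrow \fSAT^\ast \geq 0$ is standard. There is no real obstacle here; the proposition is essentially a bookkeeping consequence of Definition~\ref{dfn:verification-problem} and the defining equivalence of $\fSAT$, and the proof is a short chain of "iff"s.
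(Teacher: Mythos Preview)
Your proposal is correct and matches the paper's own treatment: the paper does not give a detailed proof but simply states that the proposition ``follows directly from the definition of a satisfaction function,'' which is exactly the chain of equivalences you spell out. Your remark on the minimum-versus-infimum distinction is a valid technical caveat that the paper does not address, but it does not affect the correctness of the argument in the settings the paper considers.
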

% \begin{proof}
%   The Proposition follows from the definition of a satisfaction function.
% \end{proof}

% \paragraph{Verifiers and Falsifiers}
We now consider \emph{counterexample searchers} that evaluate the satisfaction function
for concrete inputs to compute an upper bound on 
the minimum of the verification problem~\(V\).
Such tools can disprove specification satisfaction by discovering a counterexample.
They can also prove specification satisfaction when they are \emph{sound} and \emph{complete}.
\begin{defn}[Soundness and Completeness]
  We call a counterexample searcher \emph{sound} if it computes valid upper bounds
  and \emph{complete} if it is guaranteed to find a counterexample
  whenever a counterexample exists.
\end{defn}
\begin{defn}[Verifiers and Falsifiers]
  We call a sound and complete counterexample searcher a \emph{verifier}.
  A sound but incomplete counterexample searcher is a \emph{falsifier}.
\end{defn}
% We additionally differentiate \emph{optimal} and \emph{early-exit} verifiers.
\begin{defn}[Optimal and Early-Exit Verifiers]\label{dfn:optimal-early-exit-verifiers}
  An \emph{optimal} verifier is a verifier that always produces a global minimiser of the verification 
  problem~---~a \emph{most-violating counterexample}.
  An \emph{early-exit} verifier aborts on the first counterexample it encounters.
  It provides any counterexample, without further qualifications.
\end{defn}
%A technique that allows building a verifier is global optimisation.
We can perform verification through global minimisation of the verification problem
from Equation~\eqref{eqn:v}.
%allows for performing verification.
%for attaining completeness.
For ReLU-activated neural networks, global minimisation is possible, for example,
using Mixed Integer Linear Programming~(MILP)~\citep{ChengNuehrenbergRuess2017,LomuscioMaganti2017}.
Falsifiers may perform local optimisation using projected gradient
descent~(PGD)~\citep{KurakinGoodfellowBengio2016,MadryMakelovSchmidtEtAl2018} 
to become sound but not complete.
We name the approach of using PGD for falsification \emph{BIM}, abbreviating the 
name \emph{Basic Iterative Method} used 
by~\citet{KurakinGoodfellowBengio2016}.

\subsection{Neural Network Repair}\label{sec:prelim-nn-repair}
Neural network repair means modifying a trained neural network so that it satisfies a 
specification it would otherwise violate.
While the primary goal of repair is satisfying the specification, the key secondary goal 
is that the repaired neural network still performs well on the intended task.
This secondary goal can be captured using a performance measure, such as the 
training loss function~\citep{BauerMarquartBoetiusLeueEtAl2021} 
or the distance between the modified and the original network parameters~\citep{GoldbergerKatzAdiEtAl2020}.

\begin{defn}[Repair Problem]\label{dfn:repair-problem}
  Given a neural network~\(\NN\), a property~\(\varphi = (\mathcal{X}_\varphi, \mathcal{Y}_\varphi)\) and
  a performance measure~\(J: \Reals^p \to \Reals\), repair translates to solving the \emph{repair problem}
  \begin{equation}
    R : \left\{\!
      \arraycolsep=2pt
      \begin{array}{cl}
        \underset{\varvec{\theta} \in \Reals^p}{\text{minimise}} & J(\varvec{\theta}) \\
        \text{subject to} & \fSAT(\NN(\vec{x})) \geq 0 \quad \forall \vec{x} \in \mathcal{X}_\varphi.
      \end{array}
    \right.\label{eqn:r}
  \end{equation}
\end{defn}
The repair problem~\(R\) is an instance of a robust optimisation problem 
as defined in Equation~\eqref{eqn:p}.
Checking whether a parameter~\(\varvec{\theta}\) is feasible 
for~\(R\) corresponds to verification.
In particular, we can equivalently reformulate~\(R\) using 
the verification problem's minimum~\(\fSAT^\ast\) from Equation~\eqref{eqn:v} as
\begin{equation}
  R' : \left\{\!
    \arraycolsep=2pt
    \begin{array}{cl}
      \underset{\varvec{\theta} \in \Reals^p}{\text{minimise}} & J(\varvec{\theta}) \\
      \text{subject to} & \fSAT^\ast \geq 0.
    \end{array}
  \right.\label{eqn:r-alt}
\end{equation}

We stress several characteristics of the repair problem that we relax or strengthen in
Section~\ref{sec:theory}.
First of all,~\(\NN\) is a neural network and we repair all parameters~\(\varvec{\theta}\) 
of the network jointly. 
Practically,~\(\NN\) is a ReLU-activated FCNN or CNN, as these are the models most verifiers
support.
For typical specifications, such as~\(L_\infty\) adversarial robustness or 
the ACAS Xu safety specifications~\citep{KatzBarrettDillEtAl2017},
the property input set~\(\mathcal{X}_\varphi\) is a hyper-rectangle.
% Therefore, the input set is closed. 
Hyper-rectangles are closed convex polytopes and, therefore, bounded.

\subsection{Counterexample-Guided Repair}\label{sec:cgr}
In the previous section, we have seen that the repair problem includes the verification problem
as a sub-problem.
Using this insight, a natural approach to tackle the repair problem is to iterate 
running a verifier and removing the counterexamples it finds.
This yields the counterexample-guided repair algorithm, 
that was first introduced by~\citet{GoldbergerKatzAdiEtAl2020}, in a similar form.
Removing counterexamples corresponds to a scenario optimisation problem~\(\cxremove_{N}\) 
of the robust optimisation problem~\(R\) from Equation~\eqref{eqn:r}, where 
\begin{equation}
  % \cxremove_{N} \!:\! \left\{\!
  \cxremove_{N} : \left\{\!
    \arraycolsep=2pt
    \begin{array}{cl}
      \underset{\varvec{\theta} \in \Reals^p}{\text{minimise}} & J(\varvec{\theta}) \\
      \text{subject to} & \fSAT{\left(\NN{\left(\vec{x}^{(i)}\right)}\right)} \geq 0 \\
                        & \qquad\qquad\quad \forall i \in \{1, \ldots, N\}.
    \end{array}
  \right.\label{eqn:cr}
\end{equation}
Algorithm~\ref{algo:cgr} defines the counterexample-guided repair algorithm 
using~\(\cxremove_{N}\) and~\(V\) from Equation~\eqref{eqn:v}.
In analogy to~\(\cxremove_N\), we use~\(V_N\) to denote the verification problem 
in iteration~\(N\).
Concretely,
\begin{equation}
  V_N : \left\{\!
    \arraycolsep=2pt
    \begin{array}{cl}
      \underset{\vec{x}}{\text{minimise}} & \fSAT(\NN[\theta^{(N-1)}](\vec{x})) \\
      \text{subject to} & \vec{x} \in \mathcal{X_\varphi}.
    \end{array}
  \right.\label{eqn:vN}
\end{equation}
We call the iterations of Algorithm~\ref{algo:cgr} \emph{repair steps}.

Algorithm~\ref{algo:cgr} is concerned with repairing a single property.
However, the algorithm extends to repairing multiple properties by adding one 
constraint for each property to~\(\cxremove_N\) and verifying the properties separately.
While Algorithm~\ref{algo:cgr} is formulated for the repair problem~\(R\),
it is easy to generalise it to any robust program~\(P\) as defined in Equation~\eqref{eqn:p}.
Then, solving~\(\cxremove_N\) corresponds to solving~\(\scenprob\) from Equation~\eqref{eqn:sp}
and solving~\(V_N\) corresponds to finding maximal constraint violations of~\(P\).

The question we are concerned with in this paper is whether~Algorithm~\ref{algo:cgr} is 
guaranteed to terminate after finitely many repair steps.
We investigate this question in the following section by studying robust programs 
that are similar to the repair problem for neural networks and 
typical specifications, but either more restrained or more general.

\begin{algorithm}[tb]
  \caption{Counterexample-Guided Repair}\label{algo:cgr}
  \begin{algorithmic}
  \STATE \(N \leftarrow 0\)
  \REPEAT
    \STATE \COMMENT{counterexample-removal~\eqref{eqn:cr}}
    \STATE \(\varvec{\theta}^{(N)} \leftarrow\)~local minimiser of~\(\cxremove_N\)
    %\COMMENT{See~\eqref{eqn:cr}}
    %\COMMENT{Equation~\eqref{eqn:cr}}
    %\COMMENT{counterexample-removal~\eqref{eqn:cr}}
    \STATE \(N \leftarrow N + 1\)\;
    \STATE \COMMENT{verification~\eqref{eqn:vN}}
    \STATE \(\vec{x}^{(N)} \leftarrow\)~global minimiser of~\(V_N\)
    %\COMMENT{See~\eqref{eqn:v}}
    %\COMMENT{Equation~\eqref{eqn:v}}
    %\COMMENT{verification~\eqref{eqn:v}}
  \UNTIL{\(\fSAT{\left(\NN[\varvec{\theta}^{(N-1)}]{\left(\vec{x}^{(N)}\right)}\right)} \geq 0\)}
  \end{algorithmic}
\end{algorithm}

\section[Termination Results for Algorithm 1]{Termination Results for Algorithm~\ref{algo:cgr}}\label{sec:theory}
The counterexample-guided repair algorithm (Algorithm~\ref{algo:cgr}) repairs neural networks
by iteratively searching and removing counterexamples. 
In this section, we study whether Algorithm~\ref{algo:cgr} is guaranteed to terminate
and whether it produces optimally repaired networks.
Our primary focus is on studying robust optimisation problems that are more
restrained or more general than the repair problem~\(R\) from Equation~\eqref{eqn:r}.
We apply Algorithm~\ref{algo:cgr} to such problems and study termination of the 
algorithm for these problems.
% This is an attempt to approach the neural network repair problem from two sides.
% In this paper, we do not achieve answering whether Algorithm~\ref{algo:cgr} terminates 
% for neural network repair as formulated in Section~\ref{sec:prelim-nn-repair}.
% However, we believe our methodology and our results for related
% problems can be useful for further investigations.
On our way, we also address the related questions of optimality on termination and
termination when using an early-exit verifier as introduced in Definition~\ref{dfn:optimal-early-exit-verifiers}.
 
\begin{table}[tb]
  \caption[Symbol Overview]{Symbol Overview}\label{tab:symbol-overview}
  \centering
  \begin{tabular}{ll}
    \textbf{Symbol} & \multicolumn{1}{c}{\textbf{Meaning}} \\\midrule
    \(R\) & Repair Problem~\eqref{eqn:r} \\
    \(\cxremove_N\) & Counterexample Removal Problem~\eqref{eqn:cr} \\
    \(V_N\) & Verification Problem for~\(\NN[\varvec{\theta}^{(N-1)}]\)~\eqref{eqn:vN} \\
    \(\varvec{\theta}^\dagger\) & (Local) minimiser of \(R\) \\
    \(\varvec{\theta}^{(N)}\) & (Local) minimiser of \(\cxremove_N\) \\
    \(\vec{x}^{(N)}\) & Global minimiser of \(V_N\)
  \end{tabular}
\end{table}
Table~\ref{tab:symbol-overview} summarises the central problem and variable names
that we use throughout this paper.
The iterations of Algorithm~\ref{algo:cgr} are called \emph{repair steps}.
We count the repair steps starting from one but index the 
counterexample-removal problems starting from zero, reflecting the number of constraints.
Hence, the minimiser of the counterexample-removal problem~\(\cxremove_{N-1}\)
from Equation~\eqref{eqn:cr} in repair step~\(N\) 
is~\(\varvec{\theta}^{(N-1)}\).
The verification problem in repair step~\(N\) is~\(V_N\) with a 
global minimiser~\(\vec{x}^{(N)}\).
% We use~\(\varvec{\theta}^\dagger\) for a minimiser of the repair problem~\(R\) from Equation~\eqref{eqn:r}.
% We are usually satisfied with finding a local minimiser of~\(R\), for example,
% when the objective function of~\(R\) is a training loss function.
% However, in certain settings~\citep{GoldbergerKatzAdiEtAl2020}, we may also seek a 
% global minimiser of~\(R\).
% The difference is largely irrelevant for our analysis, as we are primarily
% concerned with feasibility.
%
% We now study whether this algorithm is guaranteed to converge and terminate.
% We look at different classes of~\(f\) and~\(g\). 
% These functions are the main sources of complexity in neural network repair,
% as both involve evaluating a neural network.
% In contrast, the sets~\(\mathcal{X}\) and~\(\mathcal{D}\) have a simple structure
% in neural network repair.
% 
% The final goal is to prove convergence and termination for the case when \(f\) is a
% loss function from neural network training and~\(g\) is a specification
% for the trained network.
%
\subsection{Optimality on Termination}
We prove that when applied to any robust program~\(P\) as defined in Equation~\eqref{eqn:p},
counterexample-guided repair produces a minimiser of~\(P\) 
whenever it terminates.
While Algorithm~\ref{algo:cgr} is formulated for the repair problem~\(R\),
it is easy to generalise it to~\(P\), as described in Section~\ref{sec:cgr}.
% Solving~\(\cxremove\) corresponds to solving~\(\scenprob\) from Equation~\eqref{eqn:sp}
% and solving~\(V\) corresponds to finding the maximal violation 
% of the first constraint of~\(P\).
The following proposition holds regardless of whether we search for a 
local minimiser or a global minimiser of~\(R\). 

\begin{proposition}[Optimality on Termination]\label{lem:cgr-optimality}
  Whenever Algorithm~\ref{algo:cgr} terminates after~\(\overline{N}\) 
  iterations, it holds 
  that~\(\varvec{\theta}^{(\overline{N}-1)} = \varvec{\theta}^\dagger\).
\end{proposition}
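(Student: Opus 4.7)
The plan is to exploit two facts that are immediate from the termination condition: first, that $\vec{x}^{(\overline{N})}$ is a \emph{global} minimiser of $V_{\overline{N}}$, and second, that $\cxremove_{\overline{N}-1}$ is a relaxation of $R$ whose feasible set is a superset of the feasible set of $R$. Combining these two observations, the point $\varvec{\theta}^{(\overline{N}-1)}$ output by the final counterexample-removal step must itself be feasible for $R$, and local optimality over the larger feasible set carries over to local optimality over the smaller one.

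More concretely, I would proceed in two steps. \textbf{Step 1 (feasibility).} Because the algorithm terminates at iteration $\overline{N}$, the exit test gives $\fSAT(\NN[\varvec{\theta}^{(\overline{N}-1)}](\vec{x}^{(\overline{N})})) \geq 0$. Since $\vec{x}^{(\overline{N})}$ is a global minimiser of $V_{\overline{N}}$ as defined in Equation~\eqref{eqn:vN}, for every $\vec{x}\in\mathcal{X}_\varphi$ we have
\begin{equation*}
\fSAT(\NN[\varvec{\theta}^{(\overline{N}-1)}](\vec{x})) \;\geq\; \fSAT(\NN[\varvec{\theta}^{(\overline{N}-1)}](\vec{x}^{(\overline{N})})) \;\geq\; 0,
\end{equation*}
so $\varvec{\theta}^{(\overline{N}-1)}$ satisfies the infinitely many constraints of $R$ and is thus feasible. \textbf{Step 2 (optimality).} The constraints of $\cxremove_{\overline{N}-1}$ are a finite subcollection of those of $R$, so denoting their feasible sets by $\mathcal{F}_{\cxremove}$ and $\mathcal{F}_R$ we have $\mathcal{F}_R \subseteq \mathcal{F}_{\cxremove}$. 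By construction $\varvec{\theta}^{(\overline{N}-1)}$ is a local minimiser of $J$ on $\mathcal{F}_{\cxremove}$, so there exists a neighbourhood $U$ of $\varvec{\theta}^{(\overline{N}-1)}$ with $J(\varvec{\theta}^{(\overline{N}-1)}) \leq J(\varvec{\theta})$ for all $\varvec{\theta} \in U\cap\mathcal{F}_{\cxremove}$. Restricting to $U\cap\mathcal{F}_R \subseteq U\cap\mathcal{F}_{\cxremove}$ and using Step~1 to note that $\varvec{\theta}^{(\overline{N}-1)}\in\mathcal{F}_R$, the same inequality shows that $\varvec{\theta}^{(\overline{N}-1)}$ is a local minimiser of $R$, i.e.\ may serve as a choice of $\varvec{\theta}^\dagger$. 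If one instead works with global minimisers throughout, the same argument with $U=\Reals^p$ gives global optimality.

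There is essentially no hard step here; the only subtlety worth flagging is that the identification $\varvec{\theta}^{(\overline{N}-1)} = \varvec{\theta}^\dagger$ in the statement must be read as ``$\varvec{\theta}^{(\overline{N}-1)}$ is a valid instance of a (local) minimiser of $R$'' rather than literal equality of points, since $R$ may admit many minimisers. Beyond this reading convention, the argument only uses the monotonicity of feasibility under constraint addition and the defining property of a global minimiser of $V_{\overline{N}}$, so no additional assumption on $f$, $g$, or $\mathcal{V}$ is required, and the proof therefore transfers verbatim from $R$ to the general robust program $P$ from Equation~\eqref{eqn:p} as advertised.
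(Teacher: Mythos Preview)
Your proof is correct and follows essentially the same approach as the paper: feasibility of $\varvec{\theta}^{(\overline{N}-1)}$ for $R$ from the termination condition and global optimality of $\vec{x}^{(\overline{N})}$, then optimality from the fact that $\cxremove_{\overline{N}-1}$ relaxes $R$. Your version is simply more explicit about the local-versus-global distinction and the reading of the equality, both of which the paper handles in a brief remark after its (shorter) proof.
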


We defer the proof of Proposition~\ref{lem:cgr-optimality} 
to Appendix~\ref{sec:proofs-optimality}.

\subsection{Non-Termination for General Robust Programs}%
\label{sec:non-term-general}
In this section, we demonstrate non-termination and divergence of
Algorithm~\ref{algo:cgr} when we relax the assumptions 
on the repair problem~\(R\) that we outline 
in Section~\ref{sec:prelim-nn-repair}.
In particular, we drop the assumption that the property's 
input set~\(\mathcal{X}_\varphi\) is bounded.
We disprove termination by example when~\(\mathcal{X}_\varphi\) is unbounded. 
To simplify the proof, we use a non-standard neural network architecture.
We also present a fully-connected neural network (FCNN)
that similarly leads to non-termination.
However, for this FCNN we also have to relax the assumption 
that we repair all parameters of a neural network jointly. 
Instead, we repair an individual parameter of the FCNN in isolation. 

\begin{proposition}[General Non-Termination]\label{prop:cgr-non-term}
  Algorithm~\ref{algo:cgr} is not guaranteed to terminate 
  for~\(J: \Reals^2 \to \Reals\),~\(\fSAT: \Reals^2 \to \Reals\),
  and~\(\NN: \Reals \to \Reals^2\), where~\(%
    J(\varvec{\theta}) = {\NN(0)}_1 
  \),~\(%
    \fSAT(\vec{y}) = \vec{y}_2 + \vec{y}_1 - 1
  \),~\(\mathcal{X}_\varphi = \Reals\), and
  \begin{equation}
    {\NN(\vec{x})} = \ReLU{\begin{pmatrix}
        \varvec{\theta}_1\varvec{\theta}_2 \\ 
        \varvec{\theta}_1\vec{x} + \varvec{\theta}_2
    \end{pmatrix}}\!\!\!\!\!\;,
  \end{equation}
  where~\(\ReLU{x} = \max(0, x)\) denotes the ReLU.\@
\end{proposition}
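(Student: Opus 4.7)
The plan is to prove non-termination by exhibiting an explicit infinite execution of Algorithm~\ref{algo:cgr} on the given problem. Since Algorithm~\ref{algo:cgr} only specifies that each iterate be \emph{a} local minimiser of \(\cxremove_N\) and \emph{a} global minimiser of \(V_N\), it suffices to produce one sequence of legitimate choices for which the termination check fails forever.

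First I would analyse the robust problem. Since \(J(\varvec{\theta}) = \ReLU{\varvec{\theta}_1 \varvec{\theta}_2} \geq 0\) and the point \(\varvec{\theta} = (0, 1)\) yields \(\NN(x) = (0, 1)\) for every \(x \in \Reals\), this point is feasible for \(R\) with \(J = 0\), hence \(J^{\ast} = 0\). Conversely, whenever \(\varvec{\theta}_1 \neq 0\) and \(J(\varvec{\theta}) = 0\) (equivalently, \(\varvec{\theta}_1 \varvec{\theta}_2 \leq 0\)), the affine quantity \(\varvec{\theta}_1 x + \varvec{\theta}_2\) is strictly negative on an entire half-line of \(\Reals\); on that half-line \(\NN(x) = (0, 0)\) and \(\fSAT(\NN(x)) = -1 < 0\). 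Thus any such \(\varvec{\theta}\) is an \emph{unconstrained} global minimiser of \(J\) but is infeasible for \(R\), and the verifier will always produce a fresh counterexample.

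Next I would construct the non-terminating run inductively, maintaining \(\varvec{\theta}_1^{(N)} \neq 0\) and \(J(\varvec{\theta}^{(N)}) = 0\) for all \(N \geq 1\). Set \(\varvec{\theta}^{(0)} = (0, 0)\); since \(\fSAT(\NN_{(0,0)}(x)) \equiv -1\), the verifier may return \(\vec{x}^{(1)} = 1\). Take \(\varvec{\theta}^{(1)} = (1, 0)\), which satisfies the single scenario constraint \(\ReLU{0} + \ReLU{1} \geq 1\) with \(J = 0\); then \(V_2\) admits \(\vec{x}^{(2)} = -1\). For \(N \geq 2\) take \(\varvec{\theta}^{(N)} = (-1, a_N)\) with \(a_N\) chosen large enough that \(-\vec{x}^{(i)} + a_N \geq 1\) for every \(i \leq N\); such \(a_N\) exists because only finitely many counterexamples have been accumulated. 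Since \(\varvec{\theta}_1 \varvec{\theta}_2 = -a_N \leq 0\) we have \(J(\varvec{\theta}^{(N)}) = 0\), and \(\fSAT(\NN_{\varvec{\theta}^{(N)}}(x)) = \ReLU{-x + a_N} - 1\), which equals \(-1\) for all \(x \geq a_N\); the verifier may return any such \(\vec{x}^{(N+1)}\), so the termination check fails.

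The main obstacle is justifying that each \(\varvec{\theta}^{(N)}\) is a valid local minimiser of \(\cxremove_N\); a priori, the algorithm could have selected the robust optimum \((0, 1)\) from step \(N = 2\) onward and terminated immediately. This obstacle dissolves because \(J\) is globally non-negative while the constructed \(\varvec{\theta}^{(N)}\) attains \(J = 0\), the unconstrained infimum of \(J\). Consequently \(\varvec{\theta}^{(N)}\) is a global minimiser of \(\cxremove_N\), and in particular a local one; Algorithm~\ref{algo:cgr} is under no obligation to prefer one global minimiser over another. With this observation in place the inductive run is a legitimate execution of Algorithm~\ref{algo:cgr}, and non-termination follows.
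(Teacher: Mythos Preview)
Your proof is correct and follows essentially the same strategy as the paper: exhibit an explicit execution in which every \(\varvec{\theta}^{(N)}\) attains the unconstrained minimum \(J=0\) (hence is a global, and therefore local, minimiser of \(\cxremove_N\)) while remaining infeasible for \(R\), so the verifier always returns a fresh counterexample with value \(-1\). The paper organises the argument via an auxiliary one-parameter lemma obtained by fixing \(\varvec{\theta}_1=-1\) from the outset and then arguing that this slice can be maintained, whereas you work directly in the two-parameter space with a short (unnecessary but harmless) detour through \((0,0)\) and \((1,0)\) before settling into \(\varvec{\theta}_1=-1\).
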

\begin{proofidea}
  The network in Proposition~\ref{prop:cgr-non-term} is constructed such that
  it allows for an execution of Algorithm~\ref{algo:cgr} 
  where the counterexamples~\(\vec{x}^{(N)}\) 
  and the parameter iterates~\(\varvec{\theta}^{(N)}\)
  diverge, such that Algorithm~\ref{algo:cgr} does not terminate.
  The detailed proof and a visualisation of this phenomenon 
  are contained in Appendix~\ref{sec:proofs-thm-cgr-non-term}.
\end{proofidea}

\begin{example}[Non-Termination for an FCNN]\label{example:non-term-fcnn-example}
  The network in Proposition~\ref{prop:cgr-non-term} fits our definition of a neural network
  but does not have a standard neural network architecture.
  However, Algorithm~\ref{algo:cgr} also does not terminate
  for repairing only 
  the parameter~\(\varvec{\theta}\) of the FCNN
  \begin{equation}
    \NN(\vec{x}) =
    \ReLU{%
      \!\begin{pmatrix}
        -1 & 0 \\ 1 & -1
      \end{pmatrix}
      \ReLU{%
        \!\begin{pmatrix}
          0 \\ 1
        \end{pmatrix}
        \vec{x}
        + \begin{pmatrix}
          \varvec{\theta} \\ 2
        \end{pmatrix}\!
      }\!\!\!\!\!\,
      +\! \begin{pmatrix}
        2 \\ 0
      \end{pmatrix}\!
    }\!\!\!\!\!\;,\label{eqn:non-term-fcnn-example}
  \end{equation}
  when~\(\fSAT\) and~\(J(\varvec{\theta})\) are 
  as in Proposition~\ref{prop:cgr-non-term}.
  %Figure~\ref{fig:fcnn-non-termination} visualises this FCNN.\@
  The proof of non-termination for this FCNN 
  is discussed in Appendix~\ref{sec:proofs-thm-cgr-non-term-fcnn}.
\end{example}

\subsection{Robust Programs with Linear Constraints}\label{sec:term-for-lin-constrs}
In the previous section, we relax assumptions on neural network repair
and show non-termination for the resulting more general problem.
In this section, we look at a more restricted class of 
problems instead: robust problems with linear constraints.
For this class, we can prove termination regardless of the objective~\(J\).
Therefore, this class encompasses such cases as training
a linear regression model, a linear support vector machine (SVM),
or a deep linear network~\citep{SaxeMcClellandGanguli2014} to conform 
to a linear specification.
% Linear regression models can be understood as neural networks without
% hidden layers. 
Linear specifications, as defined in Section~\ref{sec:nn-verif}, 
only consist of properties with an affine satisfaction function 
and a closed convex polytope as an input set.
% See Section~\ref{sec:experiments-linear} for a practical example.

\begin{theorem}[Termination for Linear Constraints]\label{thm:lin-term}
  Let~\(g(\varvec{\theta}, \vec{x}) = \fSAT(\NN(\vec{x}))\) 
  be linear in~\(\vec{x}\) 
  and let~\(\mathcal{X}_\varphi\) be a closed convex polytope. 
  Algorithm~\ref{algo:cgr} computes a minimiser of
  \begin{equation}
    R : \left\{\!
      \arraycolsep=2pt
      \begin{array}{cl}
        \underset{\varvec{\theta} \in \Reals^p}{\text{minimise}} & J(\varvec{\theta}) \\
        \text{subject to} & g{\left(\varvec{\theta}, \vec{x}\right)} \geq 0 \quad \forall \vec{x} \in \mathcal{X}_\varphi,
      \end{array}
    \right.\label{eqn:thm-lin-term-r}
  \end{equation}
  in a finite number of repair steps.
\end{theorem}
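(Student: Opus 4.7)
The plan is to reduce each verification subproblem to a linear program over the polytope $\mathcal{X}_\varphi$ and to exploit the finiteness of its vertex set $\vertices{\mathcal{X}_\varphi}$, arguing that Algorithm~\ref{algo:cgr} exhausts a previously unseen vertex in every iteration.

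First I would observe that in repair step $N$ the verification problem $V_N$ minimises the linear function $\vec{x} \mapsto g(\varvec{\theta}^{(N-1)}, \vec{x})$ over the closed convex polytope $\mathcal{X}_\varphi$. Classical linear programming theory guarantees that whenever such a minimum is attained, it is attained at a vertex. Without loss of generality I may therefore assume that the global minimiser $\vec{x}^{(N)}$ returned by Algorithm~\ref{algo:cgr} lies in $\vertices{\mathcal{X}_\varphi}$: any feasible LP over a polytope admits a vertex optimiser, and replacing $\vec{x}^{(N)}$ by such a vertex preserves its objective value and therefore its status as a most-violating counterexample.

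Next I would show that no vertex can be selected twice. Suppose for contradiction that $\vec{x}^{(N)} = \vec{x}^{(K)}$ for some $K < N$. Then the constraint $g(\varvec{\theta}, \vec{x}^{(K)}) \geq 0$ is already present in $\cxremove_{N-1}$, so its minimiser satisfies $g(\varvec{\theta}^{(N-1)}, \vec{x}^{(K)}) \geq 0$. Since the algorithm did not terminate after step $N$, however, $\vec{x}^{(N)}$ must be a genuine counterexample, so $g(\varvec{\theta}^{(N-1)}, \vec{x}^{(N)}) < 0$, contradicting $\vec{x}^{(N)} = \vec{x}^{(K)}$. Consequently, the sequence $(\vec{x}^{(N)})_N$ consists of pairwise distinct elements of the finite set $\vertices{\mathcal{X}_\varphi}$, so Algorithm~\ref{algo:cgr} must terminate after at most $|\vertices{\mathcal{X}_\varphi}|$ repair steps. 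Optimality of the final iterate then follows immediately from Proposition~\ref{lem:cgr-optimality}.

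The main obstacle lies in the vertex-selection step. Algorithm~\ref{algo:cgr} as stated returns an arbitrary global minimiser of $V_N$, and a degenerate LP may admit a continuum of minimisers that are not all vertices, so the "no repeated counterexample" argument does not apply verbatim. Making the proof fully rigorous therefore requires either a mild strengthening of the verifier to return a vertex minimiser, which is always possible via simplex-type reasoning for feasible LPs over polytopes, or replacing the repetition argument by one that tracks the active set of polytope constraints at $\vec{x}^{(N)}$ — again drawn from a finite collection — rather than the minimiser itself.
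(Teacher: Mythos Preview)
Your proof is correct and follows essentially the same argument as the paper: each $V_N$ is a linear program over the fixed polytope $\mathcal{X}_\varphi$, so its minimiser may be taken at a vertex, and finiteness of $\vertices{\mathcal{X}_\varphi}$ forces a repetition, at which point the already-imposed constraint shows the termination criterion is met; optimality then comes from Proposition~\ref{lem:cgr-optimality}. Your caution about the vertex-selection step is in fact more careful than the paper's own proof, which tacitly assumes a vertex minimiser is returned here and only makes the analogous assumption explicit in Theorem~\ref{thm:elem-monot-hyperrect-term}.
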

\begin{proofidea}
  For~\(R\) as in Equation~\eqref{eqn:thm-lin-term-r},
  all verification problems~\(V_N\) are linear programs 
  sharing the same feasible set~\(\mathcal{X}_\varphi\).
  Due to this, all most-violating counterexamples~\(\vec{x}^{(N)}\) 
  are vertices of~\(\mathcal{X}_\varphi\),
  of which there are only finitely many.
  This forces Algorithm~\ref{algo:cgr} to terminate.
  The detailed proof is contained in Appendix~\ref{sec:proofs-thm-lin-term}.
\end{proofidea}
The insights from our proof enable a new repair algorithm 
for linear regression models based on quadratic programming.
We discuss and evaluate this algorithm in Section~\ref{sec:experiments-linear}.

\subsection{Element-Wise Monotone Constraints}
Next, we study a different restricted class of repair problems that contains
repairing single ReLU and sigmoid neurons to conform to linear specifications.
This includes repairing linear classifiers, 
which are single sigmoid neurons.
In this class of problems, the constraint 
function~\(g(\varvec{\theta}, \vec{x}) = \fSAT(\NN(\vec{x}))\)
is \emph{element-wise monotone} and continuous
and~\(\mathcal{X}_\varphi\) is a hyper-rectangle.
We show termination for this class.

Element-wise monotone functions are monotone in each argument,
all other arguments being fixed at some value.
They can be monotonically increasing and decreasing
in the same element but only for different values of the remaining elements.
We formally define element-wise monotonicity in Appendix~\ref{sec:proofs-thm-elem-monot-term}.
The definition includes single ReLU and sigmoid neurons.
% Examples of element-wise monotone functions include~\(
%   \textstyle\ReLU{\transp{\vec{w}}\vec{x} + \vec{b}}
% \) and~\(
%   \textstyle\sigma{\left(\transp{\vec{w}}\vec{x} + \vec{b}\right)}
% \),
% where~\(\ReLU{x} = \max(0, x)\) is the ReLU function
% and~\(\textstyle \sigma(x) = \frac{1}{1 + e^{-x}}\) is the sigmoid function. 
% These functions are also continuous.

\begin{theorem}[Termination for Element-Wise Monotone Constraints]\label{thm:elem-monot-hyperrect-term}
  Let~\(g(\varvec{\theta}, \vec{x}) = \fSAT(\NN(\vec{x}))\) be element-wise monotone and continuous.
  Let~\(\mathcal{X}_\varphi\) be a hyper-rectangle. 
  Algorithm~\ref{algo:cgr} computes a minimiser of
  \begin{equation}
    R : \left\{\!
      \arraycolsep=2pt
      \begin{array}{cl}
        \underset{\varvec{\theta} \in \Reals^p}{\text{minimise}} & J(\varvec{\theta}) \\
        \text{subject to} & g{\left(\varvec{\theta}, \vec{x}\right)} \geq 0 \quad \forall \vec{x} \in \mathcal{X}_\varphi
      \end{array}
    \right.
  \end{equation}
  in a finite number of repair steps under the assumption that the algorithm 
  prefers global minimisers of~\(V_N\) that are vertices 
  of~\(\mathcal{X}_\varphi\).
\end{theorem}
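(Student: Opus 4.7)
The plan is to combine two ingredients: the fact that a hyper-rectangle has only finitely many vertices, together with the observation that element-wise monotonicity forces the global minimum of each verification problem \(V_N\) to be attained at a vertex of \(\mathcal{X}_\varphi\). Once both are in place, termination becomes a pigeon-hole argument, and optimality follows from Proposition~\ref{lem:cgr-optimality}.

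The main technical step is to show that for any fixed \(\varvec{\theta}\), the infimum of \(\vec{x} \mapsto g(\varvec{\theta}, \vec{x})\) over \(\mathcal{X}_\varphi\) is attained at a vertex. The argument I have in mind is a coordinate-sweep: starting from an arbitrary point \(\vec{x}^* \in \mathcal{X}_\varphi\), look at coordinate~\(1\); by element-wise monotonicity in that coordinate, one of the two endpoints of the corresponding interval yields a value no larger than \(g(\varvec{\theta}, \vec{x}^*)\). Replace \(\vec{x}^*_1\) by that endpoint and proceed to coordinate~\(2\), and so on. After \(n\) such replacements, the iterate lies at a vertex of \(\mathcal{X}_\varphi\) and has a \(g\)-value at most that of the original point. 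Continuity ensures a minimiser over the compact hyper-rectangle exists in the first place, and the sweep argument then pushes any minimiser to a vertex without increasing its value; so the set of vertex-minimisers is non-empty. Under the hypothesis that Algorithm~\ref{algo:cgr} prefers vertex minimisers of \(V_N\), every counterexample \(\vec{x}^{(N)}\) produced by the algorithm therefore lies in the finite set \(\vertices{\mathcal{X}_\varphi}\).

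The second step is the termination count. I claim the sequence \(\vec{x}^{(1)}, \vec{x}^{(2)}, \ldots\) contains no repeats before the algorithm terminates. Indeed, suppose \(\vec{x}^{(N)} = \vec{x}^{(i)}\) for some \(i < N\). Since \(\varvec{\theta}^{(N-1)}\) is feasible for \(\cxremove_{N-1}\), it satisfies the constraint at \(\vec{x}^{(i)}\), i.e.\ \(g{\left(\varvec{\theta}^{(N-1)}, \vec{x}^{(i)}\right)} \geq 0\); but then \(g{\left(\varvec{\theta}^{(N-1)}, \vec{x}^{(N)}\right)} \geq 0\), which is precisely the termination test in Algorithm~\ref{algo:cgr}, so the algorithm halts in iteration \(N\). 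Combined with the previous step, each produced counterexample is a distinct vertex of \(\mathcal{X}_\varphi\); hyper-rectangles in \(\Reals^n\) have exactly \(2^n\) vertices, so the algorithm must terminate after at most \(2^n + 1\) repair steps. Optimality of \(\varvec{\theta}^{(\overline{N}-1)}\) upon termination is then immediate from Proposition~\ref{lem:cgr-optimality}.

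The obstacle I expect is the vertex-attainment claim: element-wise monotonicity is weaker than joint or componentwise monotonicity, because the direction of monotonicity in one coordinate may depend on the other coordinates. The coordinate-sweep argument sidesteps this by only ever comparing \(g\) at two points that differ in a single coordinate, with all other coordinates frozen at the values just produced; this is exactly the setting in which the element-wise definition applies. Beyond that, the proof is bookkeeping: write the formal induction for the sweep, invoke compactness of \(\mathcal{X}_\varphi\) and continuity of \(g\) for existence of a minimiser, and bound the iteration count by \(\lvert\vertices{\mathcal{X}_\varphi}\rvert + 1\). The full argument, together with a precise definition of element-wise monotonicity, would go in an appendix analogous to Appendix~\ref{sec:proofs-thm-elem-monot-term}.
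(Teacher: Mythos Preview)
Your proposal is correct and follows essentially the same approach as the paper: the paper proves a preliminary lemma (Lemma~\ref{lem:elem-monot-minimisers}) establishing that some vertex of \(\mathcal{X}_\varphi\) is always a global minimiser of \(V_N\) via exactly the coordinate-sweep argument you describe, and then concludes termination by the same pigeon-hole reasoning as in Theorem~\ref{thm:lin-term}, with optimality from Proposition~\ref{lem:cgr-optimality}. Your explicit bound of \(2^n + 1\) repair steps and your remark on why the sweep argument is compatible with the weak notion of element-wise monotonicity are both valid refinements that the paper leaves implicit.
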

The assumption in this theorem is weak, as we show in 
Appendix~\ref{sec:proofs-thm-elem-monot-term}.
In particular, it is easy to construct a global minimiser of~\(V_N\)
that is a vertex of~\(\mathcal{X}_\varphi\) 
given any global minimiser of~\(V_N\).
Given that all~\(\vec{x}^{(N)}\) are vertices of~\(\mathcal{X}_\varphi\) 
under this assumption, 
Theorem~\ref{thm:elem-monot-hyperrect-term} follows analogously 
to Theorem~\ref{thm:lin-term}. 
Appendix~\ref{sec:proofs-thm-elem-monot-term} contains a detailed proof.
%
% While we have proven termination for single neurons now, the facts that we used in our proof no longer
% hold when we consider neural networks of multiple neurons.
% Notably, for such networks,~\(V_N\) can have a minimiser anywhere inside the feasible region
% and this minimiser may move when the network parameters are modified.
% Coming from the other side, the construction that we use in 
% Section~\ref{sec:non-term-general} relies on a diverging sequence of counterexamples. 
% However, when counterexamples need to lie in a bounded set, as it is the case with common neural network
% specifications, it becomes intricate to construct a diverging sequence originating from a repair problem.

\subsection{Neural Network Repair with Bounded Input Sets}
\begin{table*}[tb]
  \caption[Summary of Termination Results for Algorithm 1]{%
    Summary of Termination Results for Algorithm~\ref{algo:cgr}
  }\label{tab:theory-results-summary}%
  \centering%
  \renewcommand{\arraystretch}{1.33}%
  \begin{tabular}{p{4.5cm}p{5cm}p{3cm}c@{\hspace{0.2cm}}l}
    \multicolumn{1}{c}{\textbf{Problem Class}} & 
    \multicolumn{1}{c}{\textbf{Model}} & 
    \multicolumn{1}{c}{\textbf{Specification}} & 
    \multicolumn{2}{l}{\textbf{Termination}} \\\midrule
    \(\fSAT{\left(\NN(\vec{x})\right)}\)~linear in~\(\vec{x}\), \(\mathcal{X}_\varphi\)~closed convex polytope 
    & Linear Regression Model, Linear SVM, Deep Linear Network & Linear & \(\checkmark\) & \mbox{(Theorem~\ref{thm:lin-term})} \\
    \(\fSAT{\left(\NN(\vec{x})\right)}\)~elem.-wise mon.\ and cont., \(\mathcal{X}_\varphi\)~hyper-rectangle 
    & Linear Classifier, ReLU Neuron & Linear & \(\checkmark\) & \mbox{(Theorem~\ref{thm:elem-monot-hyperrect-term})} \\
    % \(\NN(\vec{x})\)~neural network, 
    \(\mathcal{X}_\varphi\)~bounded & 
    Neural Network & Bounded Input Set & ? & \\
    % \(\NN(\vec{x})\)~neural network, 
    \(\mathcal{X}_\varphi\)~unbounded & 
    Neural Network & Unbounded Input Set&  
    \(\bigtimes\) & \mbox{(Proposition~\ref{prop:cgr-non-term})} \\
    Using an early-exit verifier & Any & Any & 
    \(\bigtimes\) & \mbox{(Proposition~\ref{prop:early-exit-non-term})}
  \end{tabular}
  \renewcommand{\arraystretch}{1.0}  % back to default
\end{table*}
Table~\ref{tab:theory-results-summary} summarises our results regarding
the termination of Algorithm~\ref{algo:cgr}.
On the one hand, Theorem~\ref{thm:elem-monot-hyperrect-term} provides us 
with a termination guarantee for repairing single neurons.
On the other hand, Proposition~\ref{prop:cgr-non-term}
shows that Algorithm~\ref{algo:cgr} is not, in general, guaranteed
to terminate when applied to neural networks.
However, both results do not readily transfer to our 
primary target~---~neural network repair with 
bounded input sets.
When looking at neural network repair, the verification problem can 
have a minimiser anywhere inside the feasible region.
Furthermore, this minimiser may move when the network parameters are modified.
Therefore, the reasoning we use for proving Theorems~\ref{thm:lin-term}
and~\ref{thm:elem-monot-hyperrect-term} is not directly applicable
when repairing neural networks.
Coming from the other side, Proposition~\ref{prop:cgr-non-term} relies on
constructing a diverging sequence of counterexamples. 
However, when counterexamples need to lie in a bounded set, 
as it is the case with common neural network
specifications, it becomes intricate to construct a diverging 
sequence originating from a repair problem.

In summary, although we can not answer at this point whether 
Algorithm~\ref{algo:cgr} terminates when applied
to neural network repair for bounded property input sets, 
our methodology is useful for studying related questions.
% The difficulties in transferring these results to our primary target~---~neural network repair with 
% bounded input sets~---~are discussed in Appendix~\ref{sec:proofs-outlook}.
In the following section, we continue our theoretical analysis, 
showing that early-exit verifiers
are insufficient for guaranteeing termination 
of Algorithm~\ref{algo:cgr}.

\begin{figure*}[tb]
  \centering
  \begin{subfigure}{0.5\textwidth}
    \centering
    \includegraphics[width=0.85\textwidth]{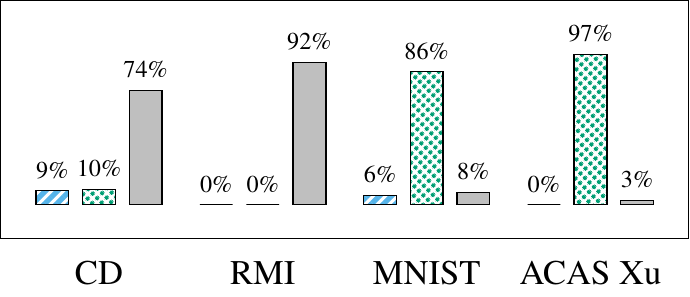}
    \caption{Which is faster in terms of runtime?}%
    \label{fig:optimal-vs-early-exit-faster-runtime}
  \end{subfigure}%
  \begin{subfigure}{0.5\textwidth}
    \centering
    \includegraphics[width=0.85\textwidth]{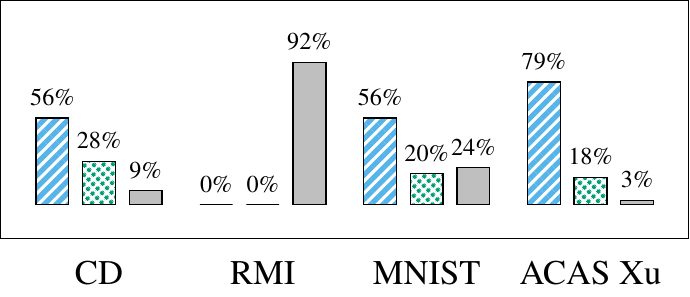}
    \caption{Which is faster in terms of repair steps?}%
    \label{fig:optimal-vs-early-exit-faster-repair-steps}
  \end{subfigure}
  \caption[Optimal vs.\ Early-Exit Verifer]{%
    Optimal vs.\ Early-Exit Verifier.
    We plot how frequently repair using the optimal 
    verifier~\includegraphics{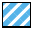}
    or the early-exit verifier~\includegraphics{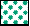}
    is faster in terms of~(\subref{fig:optimal-vs-early-exit-faster-runtime})
    runtime and~(\subref{fig:optimal-vs-early-exit-faster-repair-steps})
    repair steps.
    Grey bars~\includegraphics{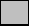}
    depict how frequently both approaches are equally fast. 
    We consider two runtimes equal when they deviate by at most~\num{30} seconds.
    We use four different datasets: CollisionDetection (CD), integer datasets (RMI),
    MNIST and ACAS Xu.
    Gaps to \SI{100}{\percent} are due to failing repairs.
  }\label{fig:optimal-vs-early-exit-faster}
\end{figure*}
\subsection{Early-Exit Verifiers}%
\label{sec:theory-early-exit-verifiers}
From a verification perspective, verifiers are not required to find most-violating
counterexamples.
Instead, it suffices to find any counterexample if one exists.
In this section, we show that using just any counterexample is not sufficient for 
Algorithm~\ref{algo:cgr} to terminate, even for linear regression models.
% We show that when using an \emph{early-exit} verifier that produces 
% such otherwise unqualified counterexamples, repair may fail even for 
% repairing linear regression models.
% As linear regression models are special neural networks without hidden layers, this result
% propagates to neural network repair.
%
% This justifies that our framework assumes verifiers that find most-violating counterexamples.
% Since most neural network verifiers are based on Mixed Integer Linear Programming (MILP) 
% concepts, such as branch and bound,~\citep{BakLiuJohnson2021,BunelTurkaslanTorrEtAl2018}
% we can modify them to find most-violating counterexamples.
% We call verifiers that do not compute most-violating counterexamples, \emph{early-exit} verifiers.
%
Consider a modification of Algorithm~\ref{algo:cgr}, where we only search for a
feasible point of~\(V_N\) with a negative objective value instead of the global
minimum.
This corresponds to using an \emph{early-exit} verifier during repair. 
The following proposition demonstrates that this modification can lead 
to non-termination.
% even for robust optimisation problems with linear constraints.

\begin{proposition}[Non-Termination for Early-Exit Verifiers]%
  \label{prop:early-exit-non-term}
  Algorithm~\ref{algo:cgr} modified to use an early-exit verifier is 
  not guaranteed to terminate for~\(%
    J, \fSAT, \NN: \Reals \to \Reals
  \), where~\(%
    J(\varvec{\theta})  = |\varvec{\theta}|
  \),~\(%
    \fSAT(\vec{y})      = \vec{y}
  \),~\(\NN(\vec{x}) = \varvec{\theta} - \vec{x}\), 
  and~\(%
    \mathcal{X}_\varphi = [0, 1]
  \).
\end{proposition}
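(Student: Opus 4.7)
The plan is to explicitly exhibit a run of the modified algorithm that loops forever on this instance. Because all quantities are one-dimensional and piecewise linear, everything can be computed in closed form: the robust problem demands $\theta \geq x$ for all $x \in [0,1]$, hence $\theta^\dagger = 1$ with $J(\theta^\dagger) = 1$, while a counterexample in step $N$ is any $x \in \mathcal{X}_\varphi$ with $\theta^{(N-1)} - x < 0$, i.e.\ any $x$ in the interval $\bigl(\theta^{(N-1)}, 1\bigr]$.

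First I would solve the counterexample-removal problem $\cxremove_N$ in closed form. Given scenarios $x^{(1)}, \ldots, x^{(N)} \in [0,1]$, the constraints read $\theta \geq x^{(i)}$ for every $i$, and minimising $|\theta|$ over $\{\theta \in \Reals \setsep \theta \geq \max_i x^{(i)}\}$ yields
\begin{equation}
  \varvec{\theta}^{(N)} \;=\; \max_{1 \leq i \leq N} x^{(i)}.
\end{equation}
In particular, with $\varvec{\theta}^{(0)} = 0$, every iterate lies in $[0,1)$ as long as all counterexamples supplied so far lie in $[0,1)$.

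Next I would construct an adversarial response schedule for the early-exit verifier. By Definition~\ref{dfn:optimal-early-exit-verifiers}, an early-exit verifier may return \emph{any} feasible point of $V_N$ with negative objective value, without any further qualification. In particular, it is free to return
\begin{equation}
  \vec{x}^{(N)} \;=\; \tfrac{1}{2}\bigl(\varvec{\theta}^{(N-1)} + 1\bigr) \;\in\; \bigl(\varvec{\theta}^{(N-1)},\, 1\bigr),
\end{equation}
which is a valid counterexample because $\fSAT(\NN(\vec{x}^{(N)})) = \varvec{\theta}^{(N-1)} - \vec{x}^{(N)} < 0$. A straightforward induction then gives $\varvec{\theta}^{(N)} = 1 - 2^{-N}$ for all $N \geq 0$, so $\varvec{\theta}^{(N)} < 1$ and $\fSAT(\NN[\varvec{\theta}^{(N)}](1)) = \varvec{\theta}^{(N)} - 1 < 0$. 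Hence the termination test at the bottom of Algorithm~\ref{algo:cgr} fails after every iteration, and the algorithm runs forever on this trajectory.

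The only subtle point is to make sure the constructed trajectory is a legitimate execution of the modified algorithm rather than a pathological hypothetical. This follows because early-exit verifiers are defined by what they are \emph{allowed} to do, not by a specific search strategy: the proposition states that Algorithm~\ref{algo:cgr} is not \emph{guaranteed} to terminate, so exhibiting a single admissible trajectory of counterexamples is sufficient. I would close the proof by remarking that the robust optimum $\varvec{\theta}^\dagger = 1$ does exist and is attained by the optimal verifier after one step, contrasting the two regimes and explaining why the result hinges purely on the verifier returning an arbitrarily weak counterexample.
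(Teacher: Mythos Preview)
Your proof is correct and follows the same strategy as the paper: exhibit a specific admissible sequence of early-exit counterexamples for which the closed-form iterates $\varvec{\theta}^{(N)}=\max_i \vec{x}^{(i)}$ never reach feasibility for~$R$. The only difference is the concrete schedule~---~the paper uses $\vec{x}^{(N)}=\tfrac{1}{2}-\tfrac{1}{N+2}$, so that $\varvec{\theta}^{(N)}\to\tfrac{1}{2}$, an infeasible limit, whereas your bisection schedule gives $\varvec{\theta}^{(N)}=1-2^{-N}\to 1=\varvec{\theta}^\dagger$; either choice suffices for non-termination.
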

\begin{proofidea}
  Assume that the early-exit verifier generates the 
  sequence~\(\textstyle\vec{x}^{(N)} = \frac{1}{2} - \frac{1}{N+2}\).
  This leads to non-termination of Algorithm~\ref{algo:cgr}.
  The detailed proof of Proposition~\ref{prop:early-exit-non-term} 
  is contained in Appendix~\ref{sec:proofs-early-exit-non-term}.
\end{proofidea}

This result concludes our theoretical investigation.
% Table~\ref{tab:theory-results-summary} summarises our results regarding
% the termination of Algorithm~\ref{algo:cgr}.
In the following section, we research empirical aspects of Algorithm~\ref{algo:cgr},
including the practical implications of the above result on using early-exit verifiers
during repair.

\section{Experiments}\label{sec:experiments}
\emph{Optimal} verifiers that compute most-violating counterexamples are theoretically
advantageous but not widely available~\citep{StrongWuZeljicEtAl2021}.
Conversely, \emph{early-exit} verifiers that produce plain counterexamples without further qualifications
are readily available~\citep{%
  KatzHuangIbelingEtAl2019,%
  BakTranHobbsEtAl2020,%
  TranYangLopezEtAl2020,%
  ZhangWangXuEtAl2022,%
  FerrariMuellerJovanovicEtAl2022%
}, but are theoretically disadvantageous, as apparent 
from Section~\ref{sec:theory-early-exit-verifiers}.
% However, despite these theoretical findings, previous work reveals that
% practically it is possible to achieve repair while using early-exit verifiers~\citep{%
%   BauerMarquartBoetiusLeueEtAl2021,DongSunWangEtAl2020%
% }.
In this section, we empirically compare the effects of using most-violating 
counterexamples and sub-optimal counterexamples~---~as produced by early-exit verifiers and 
falsifiers~---~for repair.
Additionally, we apply our insights from Section~\ref{sec:term-for-lin-constrs} for repairing
linear regression models.
Appendix~\ref{sec:experiments-extra} contains additional experimental results.
Our experiments address the following questions regarding counterexample-guided repair:
\begin{enumerate}
  \item How does repair using an early-exit verifier compare quantitatively to repair using an optimal verifier?
  \item What quantitative advantages does it provide to use falsifiers during repair?
  \item Can we surpass existing repair algorithms for linear regression models using 
  our theoretical insights?
\end{enumerate}

In our experiments, we repair 
an MNIST~\citep{LeCunBottouBengioEtAl1998} image classification network,
ACAS Xu aircraft control networks~\citep{JulianKochenderferOwen2018,KatzBarrettDillEtAl2017}, 
a CollisionDetection~\citep{Ehlers2017} particle dynamics network, 
and integer dataset Recursive Model Indices (RMIs)~\citep{TanZhuGuo2021}
for database indexing.
RMIs contain two stages: a first-stage neural network
and several second-stage linear regression models.
We collect~\num{50} repair instances for MNIST,~\num{34} for 
ACAS Xu,~\num{100} for CollisionDetection,~\num{50} for RMI first-stage
networks and~\num{100} for RMI second-stage models.
A detailed description of all datasets, networks and specifications is
contained in Appendix~\ref{sec:experiment-design}.
% The network architectures are contained in Table~\ref{tab:network-architectures}.

For repair, we make use of an early-exit verifier, an optimal verifier,
% the SpecAttack falsifier~\citep{BauerMarquartBoetiusLeueEtAl2021},
and the BIM falsifier~\citep{KurakinGoodfellowBengio2016,MadryMakelovSchmidtEtAl2018}.
To obtain an optimal verifier,
we modify the ERAN verifier~\citep{SinghMuellerBalunovicEtAl} 
to compute most-violating counterexamples. 
This is described in Appendix~\ref{sec:experiments-design-eran}.
We use the modified ERAN verifier both as the early-exit and 
as the optimal verifier. % in our experiments.%, as it supports both exit modes. 
%In all experiments, 
We use the SpecRepair counterexample-removal 
algorithm~\citep{BauerMarquartBoetiusLeueEtAl2021} unless otherwise noted. 
Our implementation and hardware are documented in 
Appendix~\ref{sec:experiment-design-impl-hardware}.
Our source code is available at~\url{https://github.com/sen-uni-kn/specrepair}.
% Our experimental data is available 
% at~\url{https://doi.org/10.5281/zenodo.7938547}.
 
\subsection{Optimal vs. Early Exit Verifier}\label{sec:experiments-optimal-vs-early-exit}
To evaluate how repair using an early-exit verifier compares to repair using an 
optimal verifier, we run repair using both verifiers for CollisionDetection, MNIST,
ACAS Xu, 
and the RMI first-stage networks.
% and the first-stage models of the Integer Dataset RMIs with second-stage size~\num{304}.
% Our findings are two-fold:
% \textbf{1.}~When verification is expensive, repair using the early-exit verifier is 
% faster most of the time. 
% \textbf{2.}~For smaller networks, the two methods are typically similarly fast.
% \begin{itemize}
%   \item When verification is expensive, repair using the early-exit verifier is 
%     faster most of the time. 
%   \item For smaller networks, the two methods are typically similarly fast.
% \end{itemize}
% We observe only minimal variations regarding the performance of the repaired networks. 
% These results are reviewed in Appendix~\ref{sec:experiments-extra}.
%
% Most notably, we find that using the early-exit verifier leads to faster repairs 
% for the MNIST and ACAS Xu networks in almost all cases. 
% This is despite the optimal verifier leading to fewer repair steps and 
% finding significantly larger violations.
% For the smaller networks, we find that using the optimal verifier is faster than
% using the early-exit verifier more often than not. 

%\paragraph{Larger Networks}
Figure~\ref{fig:optimal-vs-early-exit-faster} depicts which verifier
leads to repair fastest.
% It shows which verifier is fastest in terms of absolute runtime of repair 
% and number of repair steps Algorithm~\ref{algo:cgr} performs.
% Figure~\ref{fig:optimal-vs-early-exit-faster} depicts how frequently using the 
% optimal verifier yields faster repairs in our experiments. 
The figure shows this both for the absolute runtime of repair and 
the number of repair steps.
For the larger MNIST and ACAS Xu networks, we observe that repair using the
early-exit verifier requires less runtime in most cases. 
Regarding the number of repair steps, we observe the opposite trend. 
Here, the optimal verifier yields repair in fewer repair steps more often than not.
The additional runtime cost of computing
most-violating counterexamples offsets the advantage in repair steps.
% This is particularly true for ACAS Xu.
% In extreme cases, the early-exit verifier enables repair while the 
% optimal verifier leads to a timeout. 
% Due to this, using the early-exit verifier has a success rate of \SI{100}{\percent}
% for ACAS Xu, compared to \SI{82.5}{\percent} when using the optimal verifier
% (MNIST:\ \SI{100}{\percent} to \SI{96}{\percent}).
%
% The finding that the optimal verifier leads to fewer repair steps aligns 
% well with theoretical intuition.
% From a theoretical perspective, we expect that most violating counterexamples should
% better guide Algorithm~\ref{algo:cgr} towards a repaired network. 
% However, the fact that~\SI{20}{\percent} of the cases defy our
% expectation means that our intuition is limited.
% This could be due to the counterexample-removal procedure, but 
% neural network repair may also simply yield unintuitive structures.
%
%\paragraph{Smaller Networks}
For the smaller CollisionDetection network and the RMI first stage networks, 
we primarily observe that only infrequently repair using one verifier 
outperforms using the other by more than~\num{30} seconds.
% For the smaller CollisionDetection network and the RMI first stage networks, 
% we primarily observe 
% that, typically, both verifiers yield interchangeable runtime.
% Only infrequently repair using one verifier outperforms using the other
% by more than~\num{30} seconds.
% This is apparent from Figure~\ref{fig:optimal-vs-early-exit-faster-runtime}.
While there is no variation regarding the number of repair steps for the 
Integer Dataset RMIs, Figure~\ref{fig:optimal-vs-early-exit-faster-repair-steps} shows
the same trend for CollisionDetection as for ACAS Xu and MNIST.\@
% The advantage in repair steps could provide the optimal verifier with a runtime advantage, 
% but this is masked by the additional runtime cost of computing most-violating counterexamples.
% This explains the discrepancy between Figure~\ref{fig:optimal-vs-early-exit-faster-runtime}
% and~\ref{fig:optimal-vs-early-exit-faster-repair-steps} regarding how frequently
% the optimal verifier is faster.

\subsection{Using Falsifiers for Repair}\label{sec:experiments-falsifiers}
\begin{figure}[tb]
  \centering
  \includegraphics[width=\linewidth]{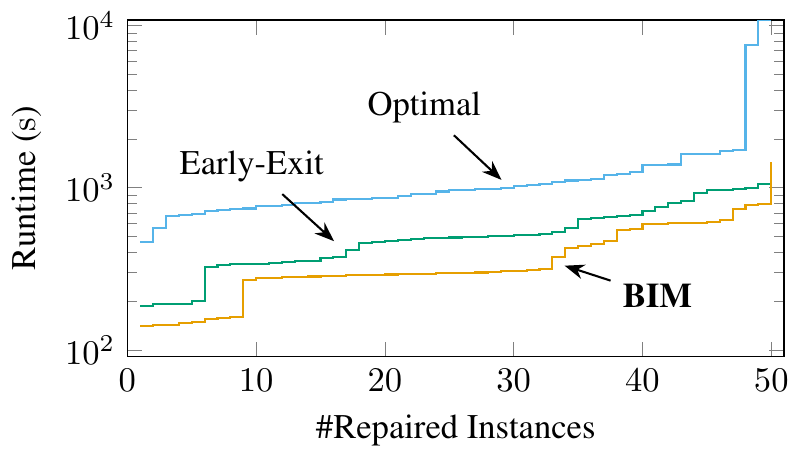}
  %\caption{MNIST}
  %
  \caption[Repair using Falsifiers]{%
    Repair using Falsifiers.
    We plot the number of repaired MNIST instances that individually
    require less than a certain runtime.
    We plot this for repair using
    BIM~\includegraphics{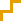},
    only the optimal verifier~\includegraphics{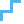}
    and only the early-exit
    verifier~\includegraphics{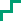}.
    %Both experiments use a timeout of three hours.
    % Runtimes are given on a logarithmic scale.
  }\label{fig:falsifiers-cactus-mnist}
\end{figure}
Falsifiers are sound but incomplete counterexample searchers that specialise in finding
violations fast. 
In this section, we study how falsifiers can speed up repair. 
For this purpose, we repair an MNIST network using the
BIM~\citep{KurakinGoodfellowBengio2016,MadryMakelovSchmidtEtAl2018} falsifier
that we describe in Section~\ref{sec:nn-verif}.
% We outline the approach of the BIM falsifier in Section~\ref{sec:nn-verif}.
We start repair by searching counterexamples using BIM.\@
Only when BIM fails to produce further counterexamples we
turn to the early-exit verifier. 
% While we hope that we only need to invoke the verifier once to prove
% specification satisfaction,
% practically we often have to perform several additional repair
% steps using the verifier.
Ideally, we would want that the verifier is invoked only once to prove 
specification satisfaction.
Practically, often several additional repair steps have to be performed using the verifier. 

Figure~\ref{fig:falsifiers-cactus-mnist} summarises the results of our experiment.
We see that using BIM can significantly accelerate repair of the MNIST network,
demonstrating the potential of falsifiers for repair.
% Repair using BIM is the fastest method in \SI{70}{\percent} of the repair cases, 
% compared to~\SI{26}{\percent} for only the early-exit verifier,~\SI{2}{\percent}~for SpecAttack
% and~\SI{0}{\percent} for only the optimal verifier.
% In~\SI{2}{\percent} of the cases, the runtime of the two best variants is within \num{30} seconds. 
BIM is an order of magnitude faster than the early-exit verifier, yet it can find
counterexamples with a larger violation.
Thus, BIM can sometimes provide the repair step advantage of the optimal verifier
at a much smaller cost. 
Appendix~\ref{sec:experiments-extra-falsifiers} contains further experiments
on using falsifiers for repair.
% However, the breakdown of which method is fastest for each repair instance 
% reveals that the picture 
% is not as clear as we may wish it to be~---~BIM provides a significant 
% runtime advantage in~\SI{70}{\percent} of the cases, 
% but in~\SI{26}{\percent} of the cases using only the early-exit verifier 
% is faster.
% Therefore, there still remains room for improving falsifiers regarding repair.
%
%\paragraph{BIM}
% \textbf{BIM}\quad
% The results of our experiments are summarised in Figure~\ref{fig:falsifiers-cactus}.
% For MNIST, we see that using the BIM falsifier can significantly accelerate repair. 
% Repair using BIM is the fastest method in \SI{70}{\percent} of the repair cases, 
% compared to~\SI{26}{\percent} for only the early-exit verifier,~\SI{2}{\percent}~for SpecAttack
% and~\SI{0}{\percent} for only the optimal verifier.
% In~\SI{2}{\percent} of the cases, the runtime of the two best variants is within \num{30} seconds. 
% 
% BIM is an order of magnitude faster than the early-exit verifier, yet it can find
% counterexamples with a larger violation than the early-exit verifier.
% Thus, BIM can sometimes provide the repair step advantage of the optimal verifier
% at a much smaller cost. 
% Again, the breakdown of which method is fastest for each repair case shows that the figure
% is not as clear as we may wish it to be~---~BIM provides a significant 
% runtime advantage in~\SI{70}{\percent} of the cases, 
% but in~\SI{26}{\percent} of the cases using only the early-exit verifier 
% is faster.
% 

\subsection{Repairing Linear Regression Models}\label{sec:experiments-linear}
\begin{table}[tb]
  \caption[Repairing Linear Regression Models]{%
    Repairing Linear Regression Models.
    We report the success rates of repairing RMI second-stage linear regression models
    for two specifications with different error bounds~\(\varepsilon\).
    The success rates include models that already satisfy their specification.
  }\label{tab:rmi-second-stage-results}
  \centering
  \begin{tabular}{lcccc}
                          & \multicolumn{2}{c}{\textbf{Success Rate}} \\
    \textbf{Algorithm} & \(\varepsilon = 100\) & \(\varepsilon = 150\) \\
    \midrule
    % Without repair        & \SI{30}{\percent} & \SI{77}{\percent}  \\
    Ouroboros~\citetalias{TanZhuGuo2021}
                          & \SI{30}{\percent} & \SI{77}{\percent}  \\
    SpecRepair~\citetalias{BauerMarquartBoetiusLeueEtAl2021}
                          & \SI{58}{\percent} & \SI{94}{\percent}  \\
  Quadratic Programming & \(\mathbf{72}\,\textbf{\%}\) & \(\mathbf{97}\,\textbf{\%}\)
  \end{tabular} \\[0.5em]
  {\small
    \citetalias{TanZhuGuo2021} \citet{TanZhuGuo2021}\ \ 
    \citetalias{BauerMarquartBoetiusLeueEtAl2021} \citet{BauerMarquartBoetiusLeueEtAl2021}
  }
\end{table}

Our theoretical investigation into the repair of linear regression models in 
Section~\ref{sec:term-for-lin-constrs} provides interesting insights that can be 
used to create a repair algorithm for linear regression models 
based on quadratic programming.
% Our theoretical investigation into the repair of linear regression models in 
% Section~\ref{sec:term-for-lin-constrs} provides us with a termination guarantee
% for repairing these models. 
% The investigation also provides interesting insights that can be used to create a 
% repair algorithm for linear regression models based on quadratic programming.
In this section, we describe this algorithm and compare 
it to the Ouroboros~\citep{TanZhuGuo2021} and 
SpecRepair~\citep{BauerMarquartBoetiusLeueEtAl2021} repair algorithms.
Both Ouroboros and SpecRepair are counterexample-guided repair algorithms.
% We reimplement Ouroboros, as the original source code is unavailable.
The linear regression models we repair are the second-stage models of
several integer dataset RMIs.
% 
% We repair the second-stage models of ten Integer Dataset RMIs with
% second-stage size ten.
% The specifications that we obtain for these models have a similar average
% size as reported by~\citet{TanZhuGuo2021} (\num{19426} properties).
% This indicates that our reimplementation is faithful. 
% Both Ouroboros and SpecRepair are counterexample-guided repair algorithms.
% Ouroboros performs repair by augmenting the training set with counterexamples
% and retraining the linear regression models using an analytic solution.
% SpecRepair uses the~\(L_1\) penalty function method~\citep{NocedalWright2006},
% training the linear regression models using gradient descent.
% We perform at most two repair steps for SpecRepair. 
% For Ouroboros, we perform up to five repair steps following~\citet{TanZhuGuo2021}.

\textbf{Insights into Repairing Linear Regression Models}\quad
We recall from Section~\ref{sec:term-for-lin-constrs}
that for repairing a linear regression model 
to conform to a linear specification, the most-violating counterexample 
for a property is always located at one of the vertices of the 
property's input set.
This implies two conclusions for repairing the second-stage RMI models:
\begin{enumerate}
  \item[a)] To verify a linear regression model, it suffices to evaluate it on 
    the vertices of the input set.
    This provides us with an analytical solution of the verification problem~\(V\).
    As the input of the RMI second-stage models is one-dimensional, 
    we only have to check property violation for two points per property.
  \item[b)] Since we can analytically solve~\(V\), we can
    rewrite~\(R'\) from Equation~\eqref{eqn:r-alt} using, 
    for RMI second-stage models, two constraints per property.
    The two constraints correspond to evaluating the satisfaction function
    for the two vertices of the property input set.
    We obtain an equivalent formulation of the repair problem~\(R\) 
    from Equation~\eqref{eqn:r} with a finite number of constraints. 
\end{enumerate}

%\paragraph{Repair using Quadratic Programming}
\textbf{Repair using Quadratic Programming}\quad
Conclusion b) provides an equivalent formulation of
the repair problem~\(R\) with finitely many linear constraints.
We train and repair the second-stage models using MSE.\@
Since MSE is a convex quadratic function and all constraints are linear, 
it follows that the repair problem is a 
quadratic program~\citep{BoydVandenberghe2014}.
This allows for applying a quadratic programming solver to repair the 
linear regression models directly.
We use Gurobi~\citep{GurobiOptimization2021} and report the results for 
repairing linear regression models using this method
under the name~\emph{Quadratic Programming}.
 
%\paragraph{Results}
%\textbf{Results}\quad
% The results of repairing linear classifiers are 
% summarised in Table~\ref{tab:rmi-second-stage-results}.
Table~\ref{tab:rmi-second-stage-results} summarises the results
of repairing the second-stage RMI linear regression models.
% To enable comparison with the results of~\citet{TanZhuGuo2021}, we also
% report how many second-stage models already satisfy their specification after training.
Our new Quadratic Programming repair algorithm achieves the highest success rate,
outperforming Ouroboros and SpecRepair.
In fact, due to solving the repair problem directly, Quadratic Programming is
guaranteed to produce the optimal repaired model whenever repair is possible.
% As this method is successful if and only if repair is possible, 
% this is not surprising.
% It is followed by SpecRepair.
% Ouroboros is the least successful method.
% This means that SpecRepair's counterexample-removal procedure is stronger than the 
% Ouroboros' counterexample-removal procedure, not only theoretically but also practically. 
% Nonetheless, there remains a significant gap between SpecRepair and Quadratic Programming.
Our implementation of the different algorithms does not allow for a fair runtime comparison, but
we remark that the runtime of Quadratic Programming is competitive in our experiments.
% However, we remark that for all algorithms the runtime is dominated by the size of the specifications.
% For Quadratic Programming, adding all constraints typically takes longer than solving the quadratic program.

\section{Conclusion}
In this paper, we prove termination of counterexample-guided repair
for linear regression models, linear classifiers and single ReLU neurons,
assuming linear specifications.
We disprove termination for repairing neural networks when the specification
has an unbounded input set.
As our results show, our methodology of viewing repair as robust optimisation is 
useful for studying the theoretical properties of counterexample-guided repair.
Empirically, we find that both early-exit verifiers and falsifiers allow
achieving repair and can give speed advantages.
For repairing linear regression models, we surpass existing approaches by designing
a novel repair algorithm using our theoretical insights.
Overall, we believe that robust optimisation provides a rich arsenal of useful tools 
for studying and advancing repair, both theoretically and practically.

\textbf{Future Work}\quad
Theorems~\ref{thm:lin-term} and~\ref{thm:elem-monot-hyperrect-term}
provide sufficient conditions for termination of Algorithm~\ref{algo:cgr}.
Deriving sufficient conditions that are closer to neural network repair
is an interesting direction for future work, as is deriving necessary
conditions for termination.
Another direction for future work is studying different classes of
verifiers beyond optimal and early-exit verifiers.
Further sufficient conditions, necessary conditions, 
and a more refined taxonomy of verifiers 
could provide insights into why, practically, 
Algorithm~\ref{algo:cgr} terminates even when using early-exit verifiers.

\def\UrlBreaks{\do\/\do-}  % also break on - in URLs for NeurIPS proceedings links
\bibliography{main.bib}
\bibliographystyle{icml2023}

%%%%%%%%%%%%%%%%%%%%%%%%%%%%%%%%%%%%%%%%%%%%%%%%%%%%%%%%%%%%%%%%%%%%%%%%%%%%%%%
%%%%%%%%%%%%%%%%%%%%%%%%%%%%%%%%%%%%%%%%%%%%%%%%%%%%%%%%%%%%%%%%%%%%%%%%%%%%%%%
% APPENDIX
%%%%%%%%%%%%%%%%%%%%%%%%%%%%%%%%%%%%%%%%%%%%%%%%%%%%%%%%%%%%%%%%%%%%%%%%%%%%%%%
%%%%%%%%%%%%%%%%%%%%%%%%%%%%%%%%%%%%%%%%%%%%%%%%%%%%%%%%%%%%%%%%%%%%%%%%%%%%%%%
\newpage
\appendix
\onecolumn

\section{Proofs}\label{sec:proofs}
% \begin{table*}[tb]
%   \caption[Results Overview]{\textbf{Termination Results Summary}}\label{tab:theory-results-summary}
%   \centering
%   \renewcommand{\arraystretch}{1.33}
%   \begin{tabular}{p{4cm}p{5cm}p{2.5cm}cl}
%     & & & \multicolumn{2}{c}{\textbf{Termination of}} \\[-0.33em]
%     \multicolumn{1}{c}{\textbf{Problem Class}} & 
%     \multicolumn{1}{c}{\textbf{Model}} & 
%     \multicolumn{1}{c}{\textbf{Specification}} & 
%     \multicolumn{2}{c}{\textbf{Algorithm~\ref{algo:cgr}}} \\\midrule
%     \(\fSAT{\left(\NN(\vec{x})\right)}\)~bi-linear, \(\mathcal{X}_\varphi\)~closed convex polytope 
%     & Linear Regression Model, Linear Support Vector Machine & Linear & \(\checkmark\) & \mbox{(Theorem~\ref{thm:lin-term})} \\
%     \(\fSAT{\left(\NN(\vec{x})\right)}\)~element-wise monotone and continuous, \(\mathcal{X}_\varphi\)~hyper-rectangle 
%     & Linear Classifier, ReLU Neuron & Linear & \(\checkmark\) & \mbox{(Theorem~\ref{thm:elem-monot-hyperrect-term})} \\
%     \(\NN(\vec{x})\)~neural network, \(\mathcal{X}_\varphi\)~bounded & 
%     Neural Network & Bounded Input Set & ? & \\
%     \(\NN(\vec{x})\)~neural network, \(\mathcal{X}_\varphi\)~unbounded & 
%     Neural Network & Unbounded Input Set &  
%     \(\bigtimes\) & \mbox{(Proposition~\ref{prop:cgr-non-term})} \\
%     Using an early-exit verifier & Any & Any & 
%     \(\bigtimes\) & \mbox{(Proposition~\ref{prop:early-exit-non-term})}
%   \end{tabular}
%   \renewcommand{\arraystretch}{1.0}  % back to default
% \end{table*}
This section contains the full proofs of all our propositions and theorems.
% Table~\ref{tab:theory-results-summary} summarises our results regarding
% the termination of Algorithm~\ref{algo:cgr}.

\subsection{Proposition~\ref{lem:cgr-optimality}}\label{sec:proofs-optimality}
\begin{proof}[Proof of Proposition~\ref{lem:cgr-optimality}]
  Assume Algorithm~\ref{algo:cgr} has terminated 
  after~\(\overline{N}\) iterations for some robust program~\(R\). 
  Since Algorithm~\ref{algo:cgr} has terminated, we know 
  that~\(\min V_{\overline{N}} \geq 0\). 
  Hence,~\(\varvec{\theta}^{(\overline{N}-1)}\) is feasible for~\(R\). 
  As~\(\varvec{\theta}^{(\overline{N}-1)}\) also minimises~\(\cxremove_{\overline{N}-1}\), 
  which is a relaxation of~\(R\),
  it follows that~\(\varvec{\theta}^{(\overline{N}-1)}\) minimises~\(R\). 
\end{proof}

This proof is independent of whether we search for a local minimiser or a global
minimiser of~\(R\). 
Therefore, Proposition~\ref{lem:cgr-optimality} holds regardless of the type of minimiser 
of~\(R\) that we are interested in.

\subsection{Proposition~\ref{prop:cgr-non-term}}\label{sec:proofs-thm-cgr-non-term}
For proving Proposition~\ref{prop:cgr-non-term}, we first prove non-termination for
a simplified version of the network in Proposition~\ref{prop:cgr-non-term}.
This simplified version serves as a lemma for proving Proposition~\ref{prop:cgr-non-term}.
\begin{lemma}\label{lem:cgr-non-term-1}
  Algorithm~\ref{algo:cgr} does not terminate 
  for~\(J: \Reals \to \Reals\),~\(\fSAT: \Reals^2 \to \Reals\)
  and~\(\NN: \Reals \to \Reals^2\) where~\(%
    J(\varvec{\theta}) = \NN(0)_1 
  \),~\(%
    \fSAT(\vec{y}) = \vec{y}_2 + \vec{y}_1 - 1
  \),~\(\mathcal{X}_\varphi = \Reals\), and
  \begin{equation}
    {\NN(\vec{x})} = \ReLU{\begin{pmatrix}
      -\varvec{\theta} \\ 
      \varvec{\theta} - \vec{x} 
    \end{pmatrix}}\!\!\!\!\!\;,
  \end{equation}
  where~\(\ReLU{x} = \max(0, x)\) denotes the ReLU.\@
\end{lemma}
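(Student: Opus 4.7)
The plan is to exhibit a concrete execution of Algorithm~\ref{algo:cgr} on the given data that runs forever, by specifying choices of minimizers at each step so that both $\varvec{\theta}^{(N)}$ and $\vec{x}^{(N)}$ diverge to $+\infty$. Specifically, I will show by induction that the algorithm admits an execution in which $\varvec{\theta}^{(0)} = 0$ and, for every $N \geq 1$, $\varvec{\theta}^{(N)} = N$ and $\vec{x}^{(N)} = N - 1$.

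First I would unpack the problem data: the network has components $\NN(\vec{x})_1 = [-\varvec{\theta}]^+$ and $\NN(\vec{x})_2 = [\varvec{\theta} - \vec{x}]^+$, so the objective reduces to $J(\varvec{\theta}) = [-\varvec{\theta}]^+$ and the composed constraint to $\fSAT(\NN(\vec{x})) = [\varvec{\theta} - \vec{x}]^+ + [-\varvec{\theta}]^+ - 1$. For any fixed $\varvec{\theta}^{(N-1)}$, the verification problem $V_N$ then has value $[-\varvec{\theta}^{(N-1)}]^+ - 1$, attained on the entire ray $\vec{x} \geq \varvec{\theta}^{(N-1)}$. For the base case, $\cxremove_0$ is unconstrained and any $\varvec{\theta} \geq 0$ minimizes $[-\varvec{\theta}]^+$, so I take $\varvec{\theta}^{(0)} = 0$; the ensuing $V_1$ has minimum value $-1$ on the ray $\vec{x} \geq 0$, and the verifier may pick $\vec{x}^{(1)} = 0$.

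For the inductive step, suppose $\vec{x}^{(i)} = i - 1$ for $i = 1, \ldots, N$ and $\varvec{\theta}^{(N-1)} = N - 1$. A short case analysis of $\cxremove_N$, whose constraints read $[\varvec{\theta} - (i-1)]^+ + [-\varvec{\theta}]^+ \geq 1$ for $i = 1, \ldots, N$, splits $\Reals$ into three regions: on $\varvec{\theta} \geq 0$ each constraint reduces to $\varvec{\theta} \geq i$, whose conjunction is $\varvec{\theta} \geq N$ with objective $0$; on $\varvec{\theta} \leq -1$ each constraint reduces to $-\varvec{\theta} \geq 1$, giving objective at least $1$; and the open interval $(-1, N)$ is infeasible. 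Hence $\varvec{\theta}^{(N)} = N$ is a (global, and in particular local) minimizer. The next verification problem $V_{N+1}$ has minimum value $-1$ on the ray $\vec{x} \geq N$, so choosing $\vec{x}^{(N+1)} = N$ closes the induction. Finally, the termination check at iteration $N$ evaluates $\fSAT(\NN[\varvec{\theta}^{(N-1)}](\vec{x}^{(N)})) = [(N-1) - (N-1)]^+ + [-(N-1)]^+ - 1 = -1 < 0$ for every $N \geq 1$, so the loop never exits.

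The main obstacle is to verify the case analysis for $\cxremove_N$ cleanly~---~in particular that the middle interval $(-1, N)$ is infeasible~---~and to justify that ``global minimiser of $V_N$'' in Algorithm~\ref{algo:cgr} permits the verifier to return any element of the ray of minimizers, so that our construction of $\vec{x}^{(N)}$ is legitimate. Everything else is routine arithmetic on ReLU terms.
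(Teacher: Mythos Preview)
Your proposal is correct and follows essentially the same route as the paper: both unpack the repair problem into $J(\varvec{\theta})=[-\varvec{\theta}]^+$ and $g(\varvec{\theta},\vec{x})=[\varvec{\theta}-\vec{x}]^+ + [-\varvec{\theta}]^+ -1$, observe that every $V_N$ attains its minimum on the ray $\vec{x}\geq\varvec{\theta}^{(N-1)}$, show that the minimisers of $\cxremove_N$ in the non-negative region satisfy $\varvec{\theta}\geq\max_i\vec{x}^{(i)}+1$, and conclude that the iterates escape to $+\infty$ and never become feasible for $R$. The only presentational difference is that you fix one concrete execution ($\varvec{\theta}^{(N)}=N$, $\vec{x}^{(N)}=N-1$) and verify it by induction, whereas the paper argues with the inequalities $\varvec{\theta}^{(0)}\geq 0$, $\vec{x}^{(N)}\geq\varvec{\theta}^{(N-1)}$, $\varvec{\theta}^{(N)}\geq\vec{x}^{(N)}+1$ for arbitrary choices of (global) minimisers; both establish the existence of a non-terminating execution, which is what the lemma needs.
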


\begin{figure*}[tb]
  \centering
  \begin{subfigure}[b]{0.45\textwidth}
    \includegraphics[width=\textwidth]{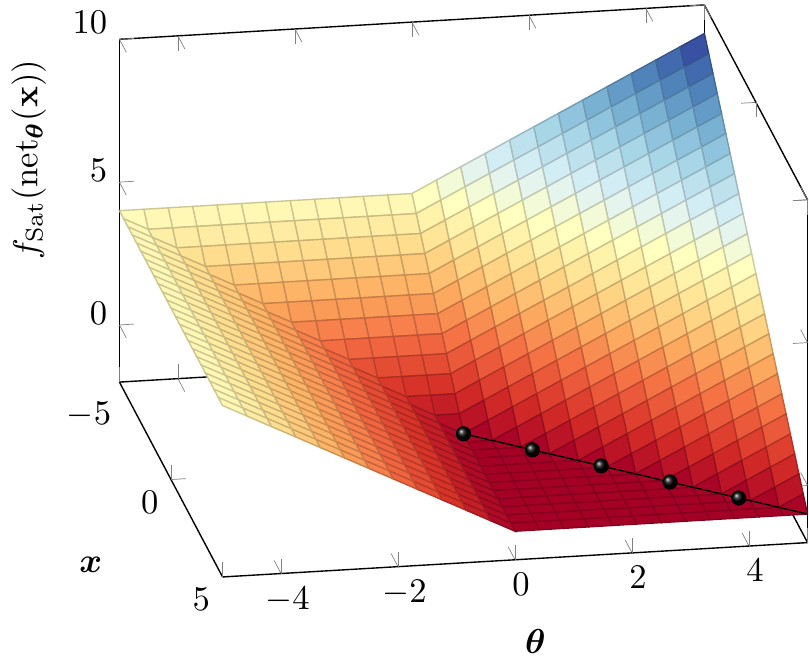}
    \caption{%
      Setting of Proposition~\ref{prop:cgr-non-term}
    }\label{fig:non-term-proof-constraint-prop}
  \end{subfigure}
  \hspace{1.5cm}
  \begin{subfigure}[b]{0.45\textwidth}
    \centering
    \includegraphics[width=\textwidth]{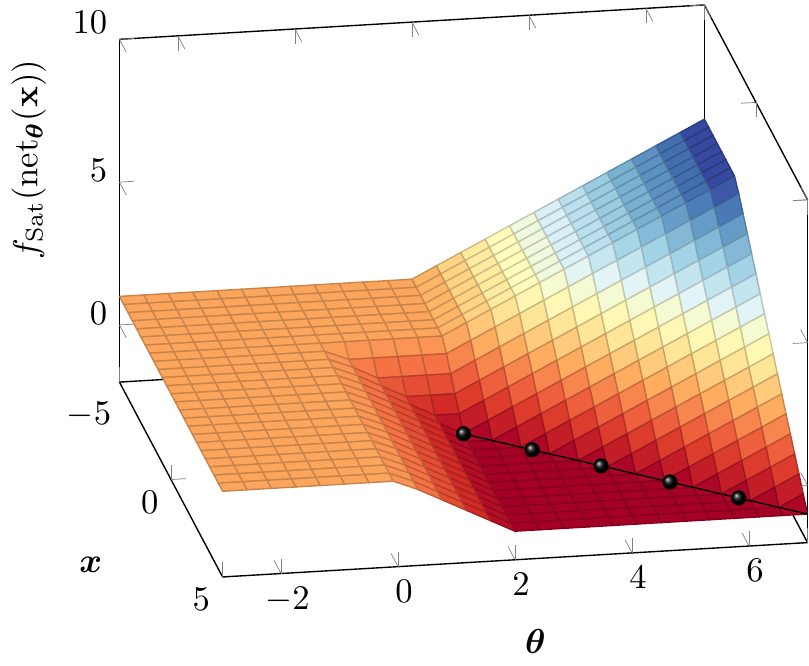}
    \caption{%
      FCNN Variant from Example~\ref{example:non-term-fcnn-example}
    }\label{fig:non-term-proof-constraint-fcnn}
  \end{subfigure}
  \caption[Constraint Visualisations for Non-Termination Proofs]{%
    Constraint Visualisations for Non-Termination Proofs.
    We visualise the function~\(\fSAT(\NN(\vec{x}))\) from Proposition~\ref{prop:cgr-non-term} 
    and for the FCNN variant from Example~\ref{example:non-term-fcnn-example}.
    In both cases, the parameter iterates~\(\varvec{\theta}^{(N)}\) 
    and the counterexamples~\(\vec{x}^{(N)}\) 
    diverge to~\(\infty\) along the dark-red flat surface where 
    the~\(\fSAT\) value is negative.
    This divergence implies non-termination of Algorithm~\ref{algo:cgr}.
    The black line~\includegraphics[scale=1]{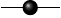}
    represents an example sequence of diverging 
    parameter and counterexample iterates.
  }\label{fig:non-term-proof-constraint}
\end{figure*}

The network~\(\NN\) in Lemma~\ref{lem:cgr-non-term-1} corresponds to the network from
Proposition~\ref{prop:cgr-non-term} with~\(\varvec{\theta}_1 = -1\).
This leads to a one-dimensional input and a one-dimensional parameter space.
Because of this, we can visualise the optimisation landscape that
underlies repairing~\(\NN\).
This visualisation is insightful for the proof of non-termination.
Therefore, before we begin the proof of Lemma~\ref{lem:cgr-non-term-1},
we first give an intuition for the proof using
Figure~\ref{fig:non-term-proof-constraint-prop}.
The core of the proof is that Algorithm~\ref{algo:cgr} generates
parameter iterates~\(\varvec{\theta}^{(N)}\) and counterexamples~\(\vec{x}^{(N)}\) that 
lie on the dark-red flat surface of
Figure~\ref{fig:non-term-proof-constraint-prop}, where~\(\fSAT\) is negative.
The combination of~\(\fSAT\) and the objective function~\(J\) that prefers
non-negative~\(\varvec{\theta}^{(N)}\) leads 
to~\(\varvec{\theta}^{(N)} \geq 0\) for every~\(N \in \Nats\).
As there is always a new counterexample~\(\vec{x}^{(N)}\) 
for every~\(\varvec{\theta}^{(N-1)} \geq 0\),
Algorithm~\ref{algo:cgr} does not
terminate.

\begin{proof}[Proof of Lemma~\ref{lem:cgr-non-term-1}]
  Let~\(J\),~\(\fSAT\),~\(\NN\) and~\(\mathcal{X}_\varphi\) be as in Lemma~\ref{lem:cgr-non-term-1}.
  Assembled into a repair problem, they yield
  \begin{equation}
    R : \left\{\!
      \arraycolsep=2pt
      \begin{array}{cl}
        \underset{\varvec{\theta} \in \Reals}{\text{minimise}} & \ReLU{-\varvec{\theta}} \\
        \text{subject to} & \ReLU{\varvec{\theta} - \vec{x}} + \ReLU{-\varvec{\theta}} -1 \geq 0 \quad \forall \vec{x} \in \Reals.
      \end{array}
    \right.\label{eqn:non-term-lem-r}
  \end{equation}
  We now show that Algorithm~\ref{algo:cgr} does not terminate when applied to~\(R\).
  % We obtain~\(\varvec{\theta}^{(0)}\) as a minimiser of~\(\cxremove_0\). 
  The problem~\(\cxremove_0\) is minimising~\(J(\varvec{\theta}) = \ReLU{-\varvec{\theta}}\) 
  without constraints.
  The minimiser of~\(J\) is not unique, but all minimisers satisfy~\(\varvec{\theta}^{(0)} \geq 0\).
  Let~\(\varvec{\theta}^{(0)} \geq 0\) be such a minimiser. 

  Searching for the global minimiser~\(\vec{x}^{(1)}\) of~\(V_1\), we find that this
  minimiser is non-unique as well. 
  However, all minimisers satisfy~\(\vec{x}^{(1)} \geq \varvec{\theta}^{(0)}\).
  This follows since any minimiser of
  \begin{equation}
    g{\left(\vec{x}, \varvec{\theta}^{(0)}\right)} = 
    \ReLU{\varvec{\theta}^{(0)} - \vec{x}} + \ReLU{-\varvec{\theta}^{(0)}} -1\label{eqn:proof-prop-non-term-1}
  \end{equation}
  minimises~\(\textstyle\ReLU{\varvec{\theta}^{(0)} - \vec{x}}\) as the remaining terms 
  of Equation~\eqref{eqn:proof-prop-non-term-1} are constant regarding~\(\vec{x}\).
  The observation~\(\vec{x}^{(1)} \geq \varvec{\theta}^{(0)}\) applies analogously 
  for later repair steps. 
  Therefore,~\(\vec{x}^{(N)} \geq \varvec{\theta}^{(N-1)}\).

  For any further repair step, we find that all non-negative 
  feasible points~\(\varvec{\theta}\) of~\(\cxremove_N\) satisfy
  \begin{equation}
    \varvec{\theta} \geq \max{\left(\vec{x}^{(1)}, \ldots, \vec{x}^{(N)}\right)} + 1.\label{eqn:non-term-lem-minimiser}
  \end{equation}
  This follows because~\(%
    \textstyle g{\left(\vec{x}^{(i)}, \varvec{\theta}\right)} \geq 0
  \) has to hold for all~\(i \in \{1, \ldots, N\}\)
  for~\(\varvec{\theta}\) to be feasible for~\(\cxremove_N\).
  Now, if~\(\varvec{\theta} \geq 0\), we have 
  \begin{equation}
    g{\left(\vec{x}^{(i)}, \varvec{\theta}\right)}
    = \ReLU{\varvec{\theta} - \vec{x}^{(i)}} + 
      \ReLU{-\varvec{\theta}} -1
    = \ReLU{\varvec{\theta} - \vec{x}^{(i)}} - 1
    \geq 0,\label{eqn:non-term-lem-minimiser-2}
  \end{equation}
  for all~\(i \in \{1, \ldots, N\}\).
  We see that Equation~\eqref{eqn:non-term-lem-minimiser-2} is satisfied
  for all~\(i \in \{1, \ldots, N\}\)
  only if~\(\varvec{\theta}\) is larger than the largest~\(\vec{x}^{(i)}\)
  by at least one. 
  This yields equivalence of Equations~\eqref{eqn:non-term-lem-minimiser-2}
  and~\eqref{eqn:non-term-lem-minimiser}.
  
  As Equation~\eqref{eqn:non-term-lem-minimiser} always has a solution, 
  there always exists a positive feasible point for~\(\cxremove_N\).
  Now, due to~\(J\), any minimiser~\(\varvec{\theta}^{(N)}\) of~\(\cxremove_N\) is positive 
  and hence satisfies Equation~\eqref{eqn:non-term-lem-minimiser}.
  Putting these results together, we obtain
  \begin{subequations}\label{eqn:non-term-lem-params}
    \begin{align}
      \varvec{\theta}^{(0)} &\geq 0 \\
      \vec{x}^{(N)} &\geq \varvec{\theta}^{(N-1)} \\
      \varvec{\theta}^{(N)} &\geq \vec{x}^{(N)} + 1.
    \end{align}
  \end{subequations}
  Inspecting Equation~\eqref{eqn:non-term-lem-r} closely reveals that no
  positive value~\(\varvec{\theta}\) is feasible for~\(R\) as there always
  exists an~\(\vec{x} \geq \varvec{\theta}\).
  However, it follows from Equations~\eqref{eqn:non-term-lem-params}
  that the iterate~\(\varvec{\theta}^{(N)}\) of Algorithm~\ref{algo:cgr}
  is always positive and thus never feasible for~\(R\).
  Since feasibility for~\(R\) is the criterion for Algorithm~\ref{algo:cgr}
  to terminate, it follows that Algorithm~\ref{algo:cgr} does not terminate
  for this repair problem.
\end{proof}

\begin{remark}
  We might be willing to accept non-termination for problems without a minimiser.
  However,~\(R\) from Equation~\eqref{eqn:non-term-lem-r} has a minimiser.
  We have already seen in the proof of Lemma~\ref{lem:cgr-non-term-1} that all
  positive~\(\varvec{\theta}\) are infeasible for~\(R\).
  Similarly, all~\(\varvec{\theta} \in (-1, 0]\) are infeasible.
  However, all~\(\varvec{\theta} \leq -1\) are feasible as
  \begin{equation}
    \ReLU{\varvec{\theta} - \vec{x}} + \ReLU{-\varvec{\theta}} -1 
    \geq \ReLU{-\varvec{\theta}} -1 \geq 0,
  \end{equation}
  for any~\(\vec{x} \in \Reals\).
  For negative~\(\varvec{\theta}\),~\(J\) prefers larger values.
  Because of this, the only minimiser of~\(R\) is~\(\varvec{\theta}^\dagger = -1\).
  Indeed, Algorithm~\ref{algo:cgr} not only fails to terminate but also
  moves further and further away from the optimal solution.
\end{remark}

We now prove Proposition~\ref{prop:cgr-non-term} using Lemma~\ref{lem:cgr-non-term-1}.
As will become clear during the proof, the divergence for Lemma~\ref{lem:cgr-non-term-1}
transfers to Proposition~\ref{prop:cgr-non-term}.

\begin{proof}[Proof of Proposition~\ref{prop:cgr-non-term}]
  Let~\(J\),~\(\fSAT\),~\(\NN\) and~\(\mathcal{X}_\varphi\) be as in Proposition~\ref{prop:cgr-non-term}.
  The repair problem is
  \begin{equation}
    R : \left\{\!
      \arraycolsep=2pt
      \begin{array}{cl}
        \underset{\varvec{\theta} \in \Reals}{\text{minimise}} & \ReLU{\varvec{\theta}_1\varvec{\theta}_2} \\
        \text{subject to} & \ReLU{\varvec{\theta}_1\vec{x} + \varvec{\theta}_2} 
        + \ReLU{\varvec{\theta}_1\varvec{\theta}_2} -1 \geq 0 \quad \forall \vec{x} \in \Reals.
      \end{array}
    \right.\label{eqn:non-term-prop-r}
  \end{equation}
  To show that Algorithm~\ref{algo:cgr} is not guaranteed to terminate for~\(R\), 
  we now construct an execution of Algorithm~\ref{algo:cgr} that does not terminate.
  We first consider~\(\cxremove_0\), which is minimising \(J\) without constraints. 
  Choosing~\(\varvec{\theta}_1 = -1\) and~\(\varvec{\theta}_2 \geq 0\) yields a local minimiser of~\(J\),
  since~\(\textstyle J{\left(\varvec{\theta}\right)} = 0\), 
  which is the global minimum of~\(J\).
  Assuming~\(\varvec{\theta}_2^{(0)} \geq 0\) and~\(\varvec{\theta}_1^{(0)} = -1\), we now show that there
  is an execution of Algorithm 1, such that 
  \begin{equation}
    \forall N \in \mathbb{N}_0: 
      \varvec{\theta}_1^{(N)} = -1
      \wedge 
      \varvec{\theta}_2^{(N)} \geq 0 
      \label{eqn:prop-cgr-non-term-1}
  \end{equation}
  As~\(\varvec{\theta}_1 = -1\) recreates the neural network from Lemma~\ref{lem:cgr-non-term-1}, 
  Proposition~\ref{prop:cgr-non-term} follows from Lemma~\ref{lem:cgr-non-term-1} 
  when Equation~\eqref{eqn:prop-cgr-non-term-1} holds for some execution.
  In the proof of Lemma~\ref{lem:cgr-non-term-1}, we have already 
  shown that there exists a~\(\varvec{\theta}^{(N)}\) 
  satisfying Equation~\eqref{eqn:prop-cgr-non-term-1} that is feasible for~\(\cxremove_N\).
  Since~\(\textstyle J{\left(\varvec{\theta}^{(N)}\right)} = 0\) for any~\(\varvec{\theta}^{(N)}\)
  satisfying Equation~\eqref{eqn:prop-cgr-non-term-1}, there exist such parameters that are a 
  local (in fact global) minimiser of~\(\cxremove_N\).
  Therefore, Algorithm~\ref{algo:cgr} is not guaranteed to terminate 
  when repairing the neural network in Proposition~\ref{prop:cgr-non-term},
  as there exists an execution of Algorithm 1 that does not terminate.
\end{proof}

\begin{remark}
  While the proof of Proposition~\ref{prop:cgr-non-term} constructs
  an execution that does not terminate, 
  the example in Proposition~\ref{prop:cgr-non-term} 
  also permits executions that terminate.
  In the following, we discuss different executions of Algorithm~\ref{algo:cgr}, 
  including the executions that terminate.
  Beyond the execution constructed in the proof of 
  Proposition~\ref{prop:cgr-non-term}, all executions with 
  \begin{equation}
    \forall N \in \mathbb{N}_0: \varvec{\theta}_1 < 0 \wedge \varvec{\theta}_2 \geq 0 
  \end{equation}
  fail to terminate.
  Such executions may, however, converge to a 
  solution~\(\varvec{\theta}_1 = 0\),~\(\varvec{\theta}_2 \geq 1\).
  While still failing to terminate, they do not diverge. 
  Values of~\(\varvec{\theta}_1 > 0\) lead to non-termination, analogously 
  to the case where~\(\varvec{\theta}_1 < 0\).
  Also in this case, Algorithm~\ref{algo:cgr} may converge to a solution with~\(\varvec{\theta}_1 = 0\). 
  There also exist executions where Algorithm~\ref{algo:cgr} terminates, namely when it chooses~\(\varvec{\theta}_1 = 0\)
  at some point during its execution. 
  Choosing~\(\varvec{\theta}_1 = 0\) is a valid choice, as it yields
  local minimisers of~\(J\) and allows removing any set of counterexamples.
  Therefore, for the example in Proposition~\ref{prop:cgr-non-term}, it is possible that Algorithm~\ref{algo:cgr}
  terminates, but there is no guarantee.

  Regarding the plausibility of non-terminating executions, we first remark that it is reasonable 
  to obtain~\(\varvec{\theta}^{(0)}_1 \neq 0\), as neural network training is unlikely to 
  reach~\(\varvec{\theta}^{(0)}_1 = 0\) exactly. 
  Regarding the plausibility of the choice of local minimisers of~\(\cxremove_N\), we consider
  different concrete counterexample-removal algorithms.
  \begin{itemize}
    \item Gradient-based techniques~\citep{%
       PulinaTacchella2010,%
       GoodfellowShlensSzegedy2015,%
       DongSunWangEtAl2020,%
       TanZhuGuo2021,%
       BauerMarquartBoetiusLeueEtAl2021%
     } are unable to remove counterexamples for Proposition~\ref{prop:cgr-non-term}, 
     as~\(\fSAT\) does not provide information on improving
     property violation through its gradient.
     This is because the region where the most violating counterexamples are located is flat. 
     Therefore, these techniques fail to remove counterexamples, which makes it impossible
     to study the termination of Algorithm 1.
   \item For SMT-based techniques~\citep{GoldbergerKatzAdiEtAl2020}, the choice of 
     the local minimum~\(\varvec{\theta}^{(N)}\) depends on the heuristics applied by the SMT solver.
     The SMT solver may choose to increase only one parameter, leading to a non-terminating
     execution, such as the one constructed in the proof of Proposition~\ref{prop:cgr-non-term}.
  \end{itemize}
\end{remark}

\subsubsection{Example~\ref{example:non-term-fcnn-example}}\label{sec:proofs-thm-cgr-non-term-fcnn}
\begin{figure}[tb]
  \centering
  \includegraphics[scale=1.0]{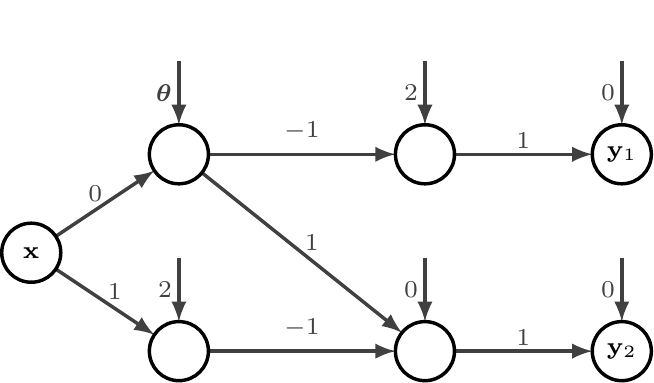}
  \caption[Fully-Connected Neural Network Variant of Proposition~\ref{prop:cgr-non-term}]{%
    Fully-Connected Neural Network Variant of Proposition~\ref{prop:cgr-non-term}.
    This Figure visualises Equation~\eqref{eqn:non-term-fcnn-example}.
    Empty nodes represent single ReLU neurons. 
    Edge labels between nodes contain the network weights. 
    Where edges are omitted, the corresponding weights are zero.
    Biases are written next to the incoming edge above the ReLU neurons.
  }\label{fig:fcnn-non-termination}
\end{figure}

Figure~\ref{fig:fcnn-non-termination} visualises the FCNN
from Example~\ref{example:non-term-fcnn-example}
The proof of non-termination for this FCNN
is analogous to the proof of Lemma~\ref{lem:cgr-non-term-1}.
Figure~\ref{fig:non-term-proof-constraint-fcnn}
visualises~\(\fSAT{\left(\NN(\vec{x})\right)}\) for
the FCNN from Equation~\eqref{eqn:non-term-fcnn-example}.
Comparison with Figure~\ref{fig:non-term-proof-constraint-prop}
reveals that the key aspects of~\(\fSAT{\left(\NN(\vec{x})\right)}\) 
for the FCNN are identical to Lemma~\ref{lem:cgr-non-term-1},
except for being shifted.
Most notably, there also exists a flat surface with a negative~\(\fSAT\) value.
As~\(J\) also prefers non-negative~\(\varvec{\theta}\) in this example,
Algorithm~\ref{algo:cgr} diverges here as well.

\subsection{Theorem~\ref{thm:lin-term}}\label{sec:proofs-thm-lin-term}
\begin{proof}[Proof of Theorem~\ref{thm:lin-term}]
  We prove termination of Algorithm~\ref{algo:cgr} for~\(R\) from Theorem~\ref{thm:lin-term}.
  Optimality then follows from Proposition~\ref{lem:cgr-optimality}.
  Let~\(g: \Reals^p \times \Reals^n \to \Reals\) be linear in the second argument.
  Let~\(\mathcal{X}_\varphi\) be a closed convex polytope.
  Given this, every~\(V_N\) is a linear program and all~\(V_N\) share the same
  feasible set~\(\mathcal{X}_\varphi\).
  Because~\(V_N\) is a linear program, its minimiser coincides with one of the 
  vertices of the feasible set~\(\mathcal{X}_\varphi\). 

  It follows that~\(\forall N \in \Nats: \vec{x}^{(N)} \in \vertices{\mathcal{X}_\varphi}\),
  where~\(\vertices{\mathcal{X}_\varphi}\) are the vertices of~\(\mathcal{X}_\varphi\).
  Because~\(\vertices{\mathcal{X}_\varphi}\) is finite, at some repair step~\(\overline{N}\) 
  of Algorithm~\ref{algo:cgr}, we obtain a minimiser that we already encountered in a 
  previous repair step.
  Let~\(\tilde{N}\) be this previous repair step, such that~\(\vec{x}^{(\tilde{N})} = \vec{x}^{(\overline{N})}\).
  Since~\(\varvec{\theta}^{(\overline{N}-1)}\) is feasible for~\(\cxremove_{\overline{N}-1}\), 
  it satisfies
  \begin{equation}
    0 \leq g{\left(\varvec{\theta}^{(\overline{N}-1)}, \vec{x}^{(\tilde{N})}\right)}
    = g{\left(\varvec{\theta}^{(\overline{N}-1)}, \vec{x}^{(\overline{N})}\right)} 
    = \fSAT{\left(\NN[\varvec{\theta}^{(\overline{N}-1)}]{\left(\vec{x}^{(\overline{N})}\right)}\right)}.
  \end{equation}
  As this is the termination criterion of Algorithm~\ref{algo:cgr}, 
  the algorithm terminates in repair step~\(\overline{N}\).
\end{proof}

\subsection{Theorem~\ref{thm:elem-monot-hyperrect-term}}\label{sec:proofs-thm-elem-monot-term}
We first formally introduce element-wise monotonous functions.
Informally, element-wise monotone functions are monotone in each argument,
all other arguments being fixed at some value.

\begin{defn}[Element-Wise Monotone]\label{dfn:elementwise-monotone}
  A function~\(f: \mathcal{X} \to \Reals\), \(\mathcal{X} \subseteq \Reals^n\), 
  is \emph{element-wise monotone} if
  \begin{equation}
    \forall i \in \{1, \ldots, n\}: 
    \forall \vec{x} \in \mathcal{X}: 
    \restr{f}{\mathcal{X} \cap \left(\{\vec{x}_1\} \times \cdots \times \{\vec{x}_{i-1}\} 
    \times \Reals \times \{\vec{x}_{i+1}\} \times \cdots \times \{\vec{x}_{n}\}\right)}
    \ \text{is monotone}.
  \end{equation}
\end{defn}

\begin{remark}
  Affine transformations of element-wise monotone functions maintain element-wise
  monotonicity. 
  This directly follows from affine transformations maintaining monotonicity.
\end{remark}

Element-wise monotone functions can be monotonically increasing and decreasing
in the same element but only for different values of the remaining elements.
Examples of element-wise monotone functions include the single neurons~\(%
  \textstyle\ReLU{\transp{\vec{w}}\vec{x} + \vec{b}}
\) and~\(%
  \textstyle\sigma{\left(\transp{\vec{w}}\vec{x} + \vec{b}\right)}
\),
where~\(\ReLU{x} = \max(0, x)\) is the ReLU function
and~\(\textstyle \sigma(x) = \frac{1}{1 + e^{-x}}\) is the sigmoid function. 
These functions are also continuous.

In Theorem~\ref{thm:elem-monot-hyperrect-term}, 
we make an assumption on the global minimisers that 
Algorithm~\ref{algo:cgr} prefers when there are multiple global minimisers.
In the proof of Lemma~\ref{lem:elem-monot-minimisers}, 
we show that the assumption in Theorem~\ref{thm:elem-monot-hyperrect-term} 
is a weak assumption.
In particular, we show that it is easy to construct a global minimiser of~\(V_N\)
that is a vertex of~\(\mathcal{X}_\varphi\) given any global minimiser of~\(V_N\).
Lemma~\ref{lem:elem-monot-minimisers} is a preliminary result for proving 
Theorem~\ref{thm:elem-monot-hyperrect-term}.

\begin{lemma}[Optimal Vertices]\label{lem:elem-monot-minimisers}
  Let~\(R\),~\(g\) and~\(\mathcal{X}_\varphi\) be as in Theorem~\ref{thm:elem-monot-hyperrect-term}.
  Then, for every~\(N \in \Nats\) there is~\(\tilde{\vec{x}}^{(N)} \in \vertices{\mathcal{X}_\varphi}\) 
  that globally minimises~\(V_N\),
  where~\(\vertices{\mathcal{X}_\varphi}\) denotes the set of vertices of~\(\mathcal{X}_\varphi\).
\end{lemma}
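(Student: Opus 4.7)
The plan is to construct a vertex minimiser of $V_N$ from an arbitrary global minimiser by ``pushing'' one coordinate at a time to an endpoint of the corresponding interval of the hyper-rectangle, using element-wise monotonicity to ensure that each push does not increase the objective value.

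First I would invoke continuity of $g{\left(\varvec{\theta}^{(N-1)}, \cdot\right)}$ together with compactness of $\mathcal{X}_\varphi$ (a closed, bounded hyper-rectangle $[a_1,b_1] \times \cdots \times [a_n,b_n]$) to conclude that $V_N$ admits a global minimiser $\vec{x}^\ast \in \mathcal{X}_\varphi$, with minimum value $\fSAT^\ast_N := g{\left(\varvec{\theta}^{(N-1)}, \vec{x}^\ast\right)}$. This existence step is routine and only uses the extreme value theorem.

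Next I would build the desired vertex iteratively. Starting from $\vec{x}^{(0)} := \vec{x}^\ast$, for each coordinate $i = 1, \ldots, n$ in turn, I would consider the one-dimensional restriction
\begin{equation}
  t \mapsto g{\left(\varvec{\theta}^{(N-1)}, \bigl(\vec{x}^{(i-1)}_1, \ldots, \vec{x}^{(i-1)}_{i-1}, t, \vec{x}^{(i-1)}_{i+1}, \ldots, \vec{x}^{(i-1)}_n\bigr)\right)}
\end{equation}
on the closed interval $[a_i, b_i]$. By Definition~\ref{dfn:elementwise-monotone}, this restriction is monotone, so it attains its minimum at either $a_i$ or $b_i$. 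Let $t^\ast \in \{a_i, b_i\}$ be such a minimiser and set $\vec{x}^{(i)}$ to agree with $\vec{x}^{(i-1)}$ except in coordinate $i$, where it takes the value $t^\ast$. By construction $g{\left(\varvec{\theta}^{(N-1)}, \vec{x}^{(i)}\right)} \leq g{\left(\varvec{\theta}^{(N-1)}, \vec{x}^{(i-1)}\right)}$. Setting $\tilde{\vec{x}}^{(N)} := \vec{x}^{(n)}$ gives a point whose every coordinate lies in $\{a_i, b_i\}$, so $\tilde{\vec{x}}^{(N)} \in \vertices{\mathcal{X}_\varphi}$, and chaining the inequalities yields $g{\left(\varvec{\theta}^{(N-1)}, \tilde{\vec{x}}^{(N)}\right)} \leq \fSAT^\ast_N$. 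Since $\fSAT^\ast_N$ is the global minimum, equality holds and $\tilde{\vec{x}}^{(N)}$ globally minimises $V_N$.

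The only subtle point, and the one I would double-check in the write-up, is that element-wise monotonicity in coordinate $i$ must be applied with the other coordinates fixed at the \emph{current} iterate $\vec{x}^{(i-1)}$ rather than at the original $\vec{x}^\ast$. This is exactly what Definition~\ref{dfn:elementwise-monotone} guarantees: monotonicity of the restriction holds for \emph{every} choice of the remaining coordinates, so replacing earlier coordinates by endpoints during the iteration does not destroy the monotonicity property needed for the next coordinate. Hence the inductive construction is valid and produces a vertex minimiser, establishing the lemma.
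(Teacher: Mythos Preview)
Your proof is correct and follows essentially the same approach as the paper: first invoke compactness and continuity to obtain a global minimiser, then iteratively push each coordinate to an endpoint of the hyper-rectangle using element-wise monotonicity, arriving at a vertex with the same objective value. Your explicit remark that monotonicity must be applied at the current iterate (and that Definition~\ref{dfn:elementwise-monotone} guarantees this for every fixing of the remaining coordinates) is exactly the point the paper's proof relies on implicitly.
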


\begin{proof}
  Let~\(R\),~\(g\),~\(\mathcal{X}_\varphi\) be as in Lemma~\ref{lem:elem-monot-minimisers}. 
  Let~\(N \in \Nats\).
  To prove the lemma we show that a)~\(V_N\) has a minimiser and 
  b) when there is a minimiser of~\(V_N\),
  some vertex of~\(\mathcal{X}_\varphi\) also minimises~\(V_N\) 
  and has the same~\(\fSAT\) value.

  \begin{enumerate}
    \item[a)] As the feasible set of~\(V_N\) is closed and bounded due to being a hyper-rectangle 
      and the objective function is continuous,~\(V_N\) has a minimiser.
    \item[b)] Let~\(\vec{x}^{(N)} \in \Reals^n\) be a global minimiser of~\(V_N\). 
      We show that there is a~\(\tilde{\vec{x}}^{(N)} \in \vertices{\mathcal{X}_\varphi}\)
      such that~\(\tilde{\vec{x}}^{(N)}\) also minimises~\(V_N\) since
      \begin{equation}
        g{\left(\varvec{\theta}^{(N-1)}, \vec{x}^{(N)}\right)} \geq g{\left(\varvec{\theta}^{(N-1)}, \tilde{\vec{x}}^{(N)}\right)}.\label{eqn:lem-elem-monot-minimisers-equal-g}
      \end{equation}

      Pick any dimension~\(i \in \{1, \ldots, n\}\).
      As~\(g\) is element-wise monotone, it is non-increasing in one of the two directions along dimension~\(i\)
      starting from~\(\vec{x}^{(N)}\).
      % Since~\(\vec{x}^{(N)}\) minimises~\(V_N\),~\(\vec{x}^{(N)}\) either lies on a face of the 
      % feasible set~\(\mathcal{X}_\varphi\) or~\(g\) is constant in the direction along 
      % dimension~\(i\), in which it is non-increasing~---~at least locally.
      % 
      When~\(\vec{x}^{(N)}\) does not already lie on a face of~\(\mathcal{X}_\varphi\)
      that bounds expansion along the~\(i\)-axis,
      we walk along the non-increasing direction along dimension~\(i\) until we reach 
      such a face of~\(\mathcal{X}_\varphi\).
      As~\(\mathcal{X}_\varphi\) is a hyper-rectangle and, therefore, bounded,
      it is guaranteed that we reach such a face.
      We pick the point on the face of~\(\mathcal{X}_\varphi\) as the new~\(\vec{x}^{(N)}\).
      While keeping dimension~\(i\) fixed, we repeat the above procedure for a different 
      dimension~\(j \in \{1, \ldots, n\}, i \neq j\).
      We iterate the procedure over all dimensions always keeping the value of~\(\vec{x}^{(N)}\)
      in already visited dimensions fixed.

      In every step of this procedure, we restrict ourselves to a lower-dimensional face
      of~\(\mathcal{X}_\varphi\) as we fix the value in one dimension.
      Thus, when we have visited every dimension, we have reached a~\(0\)-dimensional face
      of~\(\mathcal{X}_\varphi\), that is, a vertex.
      Since we only walked along directions in which~\(g\) is non-increasing
      and since~\(g\) is element-wise monotone, the
      vertex~\(\tilde{\vec{x}}^{(N)}\) that we obtain satisfies Equation~\eqref{eqn:lem-elem-monot-minimisers-equal-g}.
      Since~\(\vec{x}^{(N)}\) is a global minimiser, Equation~\eqref{eqn:lem-elem-monot-minimisers-equal-g}
      needs to hold with equality. 
  \end{enumerate}
  Together, a) and b) yield that there is always a vertex~\(\tilde{\vec{x}}^{(N)} \in \vertices{\mathcal{X}_\varphi}\)
  that globally minimises~\(V_N\).
\end{proof}

\begin{proof}[Proof of Theorem~\ref{thm:elem-monot-hyperrect-term}]
  We prove termination with optimality following from Proposition~\ref{lem:cgr-optimality}.
  Let~\(R\),~\(g\),~\(\mathcal{X}_\varphi\) be as in Theorem~\ref{thm:elem-monot-hyperrect-term}. 
  Also, assume that Algorithm~\ref{algo:cgr} prefers vertices of~\(\mathcal{X}_\varphi\)
  as global minimisers of~\(V_N\).
  From Lemma~\ref{lem:elem-monot-minimisers} we know that there is always a vertex of~\(\mathcal{X}_\varphi\)
  that minimises~\(V_N\).
  From the proof of Lemma~\ref{lem:elem-monot-minimisers} we also know that it is easy
  to find such a vertex given any global minimiser of~\(V_N\).
  As Algorithm~\ref{algo:cgr} always chooses vertices of~\(\mathcal{X}_\varphi\) under our 
  assumption, there is only a finite set of minimisers~\(\vec{x}^{(N)}\), 
  as a hyper-rectangle has only finitely many vertices.
  Given this, termination follows analogously to the proof of Theorem~\ref{thm:lin-term}.
\end{proof}

\subsection{Proposition~\ref{prop:early-exit-non-term}}\label{sec:proofs-early-exit-non-term}
\begin{proof}[Proof of Proposition~\ref{prop:early-exit-non-term}]
  Let~\(J\),~\(\fSAT\),~\(\NN\) and~\(\mathcal{X}_\varphi\) be as in Proposition~\ref{prop:early-exit-non-term}.
  When inserting these into Equation~\eqref{eqn:r}, we obtain the repair problem
  \begin{equation}
    R : \left\{\!
      \arraycolsep=2pt
      \begin{array}{cl}
        \underset{\varvec{\theta}\in\Reals}{\text{minimise}} & |\varvec{\theta}| \\
        \text{subject to} & \varvec{\theta} - \vec{x} \geq 0 \quad \forall \vec{x} \in [0, 1].
      \end{array}
    \right.
  \end{equation}
  Assume the early-exit verifier generates the sequence~\(\vec{x}^{(N)} = \frac{1}{2} - \frac{1}{N+2}\)
  as long as these points are counterexamples for~\(\NN[\varvec{\theta}^{(N-1)}]\).
  Otherwise, let it produce~\(\vec{x}^{(N)} = 1\), the global minimum of all~\(V_N\).

  Minimising~\(J\) without constraints yields~\(\varvec{\theta}^{(0)} = 0\).
  The point~\(\vec{x}^{(1)} = \frac{1}{2} - \frac{1}{3}\) is a valid result of the 
  early-exit verifier for~\(V_1\), as it is a counterexample.
  We observe that the constraint
  \begin{equation}
    \fSAT(\NN(\vec{x})) = \varvec{\theta} - \vec{x} \geq 0
  \end{equation}
  is tight when~\(\varvec{\theta} = \vec{x}\). 
  Smaller~\(\varvec{\theta}\) violate the constraint.
  Since~\(J\) prefers values of~\(\varvec{\theta}\) closer to zero, 
  it always holds for any minimiser of~\(\cxremove_N\) that
  \begin{equation}
    \varvec{\theta}^{(N)} = \max{\left(\vec{x}^{(1)}, \ldots, \vec{x}^{(N)}\right)} = \vec{x}^{(N)}.
  \end{equation}
  The last equality is due to the construction of the points returned by the early-exit verifier.
  However, for these values of~\(\varvec{\theta}^{(N)}\),~\(\textstyle \frac{1}{2} - \frac{1}{N+2}\) 
  always remains a valid product of the early-exit verifier for~\(V_N\).
  Thus we obtain,
  \begin{equation}
    \varvec{\theta}^{(N)} = \vec{x}^{(N)} = \frac{1}{2} - \frac{1}{N+2}. 
  \end{equation}
  The minimiser of~\(R\) is~\(\varvec{\theta}^\dagger = 1\).
  However, \(\varvec{\theta}^{(N)}\) does not converge to this point but to the 
  infeasible~\(\lim_{N\to\infty} \varvec{\theta}^{(N)} = \frac{1}{2}\).
  Since the iterates~\(\varvec{\theta}^{(N)}\) always remain infeasible for~\(R\), 
  the modified Algorithm~\ref{algo:cgr} never terminates.
\end{proof}

% \subsection{Neural Network Repair with Bounded Input Sets}\label{sec:proofs-outlook}
% 
% While we have proven termination for single neurons in the previous sections, 
% the facts that we used in our proof no longer
% hold when we consider neural networks of multiple neurons.
% Notably, for such networks,~\(V_N\) can have a minimiser anywhere inside the feasible region
% and this minimiser may move when the network parameters are modified.
% Coming from the other side, the construction that we use in 
% Proposition~\ref{prop:cgr-non-term} relies on a diverging sequence of counterexamples. 
% However, when counterexamples need to lie in a bounded set, 
% as it is the case with common neural network
% specifications, it becomes intricate to construct a diverging 
% sequence originating from a repair problem.
% In summary, although we can not answer at this point whether 
% Algorithm~\ref{algo:cgr} terminates when applied
% to neural network repair for bounded property input sets, 
% our methodology is useful for studying related questions.

\section{Experiment Design}\label{sec:experiment-design}
In our experiments, we repair 
an MNIST~\citep{LeCunBottouBengioEtAl1998} network,
ACAS Xu networks~\citep{KatzBarrettDillEtAl2017}, 
a CollisionDetection~\citep{Ehlers2017} network, 
and integer dataset Recursive Model Indices (RMIs)~\citep{TanZhuGuo2021}.
For repair, we make use of an early-exit verifier, an optimal verifier,
the SpecAttack falsifier~\citep{BauerMarquartBoetiusLeueEtAl2021},
and the BIM falsifier~\citep{%
  MadryMakelovSchmidtEtAl2018,%
  KurakinGoodfellowBengio2016%
}.
To obtain an optimal verifier,
we modify the ERAN verifier~\citep{SinghMuellerBalunovicEtAl} 
to compute most-violating counterexamples. 
We use the modified ERAN verifier both as the early-exit and 
as the optimal verifier in our experiments, as it supports both exit modes. 

In all experiments, we use the SpecRepair counterexample-removal 
algorithm~\citep{BauerMarquartBoetiusLeueEtAl2021}
unless otherwise noted. 
We set up all verifiers and falsifiers to return a single counterexample.
For SpecAttack, which produces multiple counterexamples, 
we select the counterexample with the largest violation.
We make this modification to eliminate differences due to some tools
returning more counterexamples than others, as we are interested in 
studying the effects of counterexample quality, not counterexample quantity. 

We make our source code available under the 
Apache 2.0 license\footnote{https://www.apache.org/licenses/LICENSE-2.0.html}
at~\url{https://github.com/sen-uni-kn/specrepair}.
Our experimental data is available
at~\url{https://doi.org/10.5281/zenodo.7938547}.

\subsection{Modifying ERAN to Compute Most-Violating Counterexamples}\label{sec:experiments-design-eran}
%%% Building an optimising verifier.
% Optimising verifier is verifier that produces maximally violating counterexamples.
% Based on ERAN.
% For high-dimensional input space or tiny networks: compute DeepPoly bounds and solve a MILP
% Early exit: Terminate MILP solving as soon as it finds a violating input 
% (to avoid floating point issues, require a violation less than 0.01). 
% Optimal: Solve the MILP to optimality.
% For low-dimensional with large diameter: Branch on the input space for a fixed number of times
% Then solve MILP for leaves. 
% Early-exit: Terminate as soon as a counterexample is found in any leaf. 
% Optimal: Solve the entire tree.
% Additional mode: Runtime budget.
% The verifier runs in optimal mode until it has depleted a runtime budget. 
% Then it returns its current most violating counterexample.
% When it has not found a counterexample yet, it continues to run in early-exit mode.
The ETH Robustness Verifier for Neural Networks (ERAN)~\citep{SinghMuellerBalunovicEtAl} 
combines abstract interpretation with Mixed Integer Linear Programming (MILP)
to verify neural networks. 
For our experiments, we use the DeepPoly abstract interpretation~\citep{SinghGehrPueschelEtAl2019}.
ERAN leverages Gurobi~\citep{GurobiOptimization2021} for MILP.\@
To verify properties with low-dimensional input sets having a large diameter, 
ERAN implements the ReluVal input splitting branch and bound 
procedure~\citep{WangPeiWhitehouseEtAl2018b}.
We employ this branch and bound procedure only for \mbox{ACAS Xu}. 

The Gurobi MILP solver can be configured to stop optimisation when encountering the
first point with a negative satisfaction function value below a small threshold.
We use this feature for the early-exit mode.
% The purpose of the threshold is to mitigate floating point errors. 
To compute most-violating counterexamples, we instead run the MILP
solver until achieving optimality. 

The input-splitting branch and bound procedure evaluates branches in parallel. 
In the early-exit mode, the procedure terminates when it finds a counterexample on any branch. 
As other branches may contain more-violating counterexamples, we search
the entire branch and bound tree in the optimal mode.

% Basing the optimal verifier and the early-exit verifier on the 
% same implementation provides us with an even ground for 
% studying the merits of most-violating counterexamples for repair. 
% Falsifiers sacrifice completeness for performance to find 
% counterexamples faster than verifiers. 
% As their advantage lies within their superior runtime, the experiments
% with falsifiers don't take place on an even ground.
% The goal here is to study the effect of their runtime advantage 
% and the quality of their counterexamples on repair.

\subsection{Datasets, Networks, and Specifications}\label{sec:experiments-design-datasets-etc}
\begin{table}[tb]
  \caption[Network Architectures]{%
    Network Architectures.
    \texttt{In(\textbullet{})} gives the dimension of the network input. 
    Convolutional layers are denoted \texttt{%
      Conv(out=\textbullet{}, kernel=\textbullet{}, stride=\textbullet{}, pad=\textbullet{})%
    }, where \texttt{out} is the number of filters and \texttt{kernel}, \texttt{stride}, 
    and \texttt{pad} are the kernel size, stride, and padding 
    for all spatial dimensions of the layer input. 
    Fully-connected layers are denoted \texttt{FC(out=\textbullet{})}, where \texttt{out} is the number of neurons. 
    [\texttt{\textbullet{}}] \texttimes{} \(n\) denotes the \(n\)-fold repetition of the block 
    in square brackets.
    RMI stands for the Integer Dataset RMIs.
  }\label{tab:network-architectures}
  \centering
  \begin{tabular}{m{0.2\linewidth}m{0.7\linewidth}}
    \textbf{Dataset} & \textbf{Network Architecture} \\
    \midrule
    MNIST & \texttt{In(1\texttimes{}28\texttimes{}28)}, \texttt{Conv(out=8, kernel=3, stride=3, pad=1)}, \texttt{ReLU}, \texttt{FC(out=80)}, \texttt{ReLU}, \texttt{FC(out=10)} \\
    ACAS Xu & \texttt{In(5)}, [\texttt{FC(out=50)}, \texttt{ReLU}] \texttimes{} 6, \texttt{FC(out=5)}\\
    Collision-Detection & \texttt{In(6)}, [\texttt{FC(out=10)}, \texttt{ReLU}] \texttimes{} 2, \texttt{FC(out=2)} \\
    RMI, First Stage & \texttt{In(1)}, [\texttt{FC(out=16)}, \texttt{ReLU}] \texttimes{} 2, \texttt{FC(out=1)} \\
    RMI, Second Stage & \texttt{In(1)}, \texttt{FC(out=1)} \\
  \end{tabular}
\end{table}

We perform experiments with four different datasets.
In this section, we introduce the datasets, as well as 
what networks we repair to conform to which specifications.
The network architectures for each dataset are contained in Table~\ref{tab:network-architectures}.

\subsubsection{MNIST}
The MNIST dataset~\citep{LeCunBottouBengioEtAl1998} consists of~\num{70000} labelled 
images of hand-written Arabic digits.
Each image consists of~\(28 \times 28\) pixels.
The dataset is split into a training set of~\num{60000} images and a test set 
of~\num{10000} images.
The task is to predict the digit in an image from the image pixel data. 
We train a small convolutional neural network achieving~\SI{97}{\percent} test set 
accuracy (\num{98}\% training set accuracy). 
Table~\ref{tab:network-architectures} contains the concrete architecture.

We repair the~\(L_\infty\) adversarial robustness of this convolutional neural
network for groups of~\num{25} input images.
These images are randomly sampled from the images in the training set 
for which the network is not robust. 
Each robustness property has a radius of~\num{0.03}.
Overall, we form~\num{50} non-overlapping groups of input images. 
Thus, each repaired network is guaranteed to be locally robust for
a different group of~\num{25} training set images. 
While specifications of this size are not practically relevant, 
they make it feasible to perform several (\num{50}) experiments
for each verifier variant. 
We formally define \(L_\infty\) adversarial robustness in Appendix~\ref{sec:specs-linf-robustness}.

We train the MNIST network using Stochastic Gradient Descent (SGD)
with a mini-batch size of~\num{32}, a learning rate of~\num{0.01}
and a momentum coefficient of~\num{0.9}, training for two epochs.
Counterexample-removal uses the same setup, except for using a decreased
learning rate of~\num{0.001} and iterating only for a tenth of an epoch. 
%Details on training and repair are given in Appendix~\ref{sec:mnist-extra}.

\subsubsection{ACAS Xu}
The ACAS Xu networks~\citep{KatzBarrettDillEtAl2017} form a 
collision avoidance system for aircraft without onboard personnel. 
% Each network receives five sensor measurements characterising the 
% relative position, heading angle and speed of the intruder aircraft 
% as well as the speed of the aircraft the system is deployed on (ownship)
% as inputs.
Each network receives five sensor measurements that characterise an encounter
with another aircraft. 
Based on these measurements, an ACAS Xu network computes scores for five possible steering
directions: Clear of Conflict (maintain course), weak left/right, and strong left/right. 
The steering direction advised to the aircraft is the output with
the minimal score. 
Each of the~\num{45} ACAS Xu networks is responsible for another class of encounter scenarios.
% Which network provides the scores for the current situation is decided
% based on the time until loss of vertical separation (\(\tau\)) and 
% the previously advised steering direction (\(a_{prev}\)).
% The system consists of 45 networks, each of which handles different
% \(\tau\) and~\(a_{prev}\) scenarios. 
More details on the system are provided by~\citet{JulianKochenderferOwen2018}.
Each ACAS Xu network is a fully-connected ReLU network with 
six hidden layers of~\num{50} neurons each.

\citet{KatzBarrettDillEtAl2017}~provide safety specifications for the ACAS Xu
networks. 
Of these specifications, the property~\(\phi_2\) is violated by the
largest number of networks. 
We repair~\(\phi_2\) for all networks violating it, yielding~\num{34}
repair cases. 
The property~\(\phi_2\) specifies that the score for the Clear of Conflict 
action is never maximal (least-advised) when the intruder is far away and
slow. 
The precise formal definition of~\(\phi_2\) is given in Appendix~\ref{sec:specs-acasxu}.

We repair the ACAS Xu networks following~\citet{BauerMarquartBoetiusLeueEtAl2021}.
To replace the unavailable ACAS Xu training data, we randomly
sample a training and a validation set and compare with the scores produced
by the original network.
As a loss function, we use the asymmetric mean square error loss 
of~\citet{JulianKochenderfer2019}.
We repair using the Adam training algorithm~\citep{KingmaBa2015}
with a learning rate of~\(10^{-4}\). 
We terminate training on convergence,
when the loss on the validation set starts to increase,
or after at most~\num{500} iterations. 

For assessing the performance of repaired networks, we compare the accuracy and the 
Mean Absolute Error (MAE) between the predictions of the repaired network and the 
predictions of the original network on a large grid of inputs,
filtering out counterexamples.
For all networks, the filtered grid contains more than~\num{24} million points.
% More detail is given in Appendix~\ref{sec:acasxu-extra}.

\subsubsection{CollisionDetection}
The CollisionDetection dataset~\citep{Ehlers2017} is introduced 
for evaluating neural network verifiers.
The task is to predict whether two particles collide based on their
relative position, speed, and turning angles. 
The training set of~\num{7000} instances and the test set of~\num{3000} instances are 
obtained from simulating particle dynamics for randomly sampled initial configurations.
We train a small fully-connected neural network with~\num{20} neurons on this dataset.
The full architecture is given in Table~\ref{tab:network-architectures}.

Similarly to MNIST, we repair the adversarial robustness of this network for 
100 non-overlapping groups of ten randomly sampled inputs from the training set.
Here we also include inputs that do not violate the specification
to gather a sufficient number of groups.
% as we use a large robustness radius such that the further data points can still influence repair.
Each robustness property has a radius of~\num{0.05}. 

The CollisionDetection network is trained for~\num{1000} iterations using Adam~\citep{KingmaBa2015}
with a learning rate of~\num{0.01}.
Repair uses Adam with a learning rate of~\num{0.001}, terminating training on convergence
or when reaching~\num{5000} iterations.
% More detail on training and repair is given in Appendix~\ref{sec:collisiondetection-extra}.

\subsubsection{Integer Dataset RMIs}
Learned index structures replace index structures, such as B-trees, with 
machine learning models~\citep{KraskaBeutelChiEtAl2018}.
\citet{TanZhuGuo2021}~identify these models as prime candidates for neural network
verification and repair due to the strict requirements of their domain 
and the small size of the models.
We use \emph{Recursive Model Indices} (RMIs)~\citep{KraskaBeutelChiEtAl2018}
in our experiments.
The task of an RMI is to resolve a key to a position in a sorted sequence.

We build datasets, RMIs and specifications following~\citet{TanZhuGuo2021},
with the exception that we create models of two sizes.
While~\citet{TanZhuGuo2021} build one RMI with a second-stage size of ten,
we build ten RMIs with a second-stage size of ten and~\num{50} RMIs 
with a second-stage size of~\num{304}.
Each RMI is constructed for a different dataset.
We create models of two second-stage sizes because the smaller size does not yield 
unsafe first-stage models, while the larger second-stage size does not yield 
unsafe second-stage models.
However, we want to repair models of both stages.
Therefore, when repairing first-stage models in Section~\ref{sec:experiments-falsifiers},
we repair the first-stage models of the RMIs with second-stage size~\num{304}.
In Section~\ref{sec:experiments-linear}, we repair the second-stage models
of the RMIs with second-stage size ten.
% We train RMIs of two sizes because for the smaller size the first-stage model
% usually satisfies the specification, while for the larger size, the second-stage
% models usually satisfy the specification.
% As we want to repair both stages, we train RMIs with two different sizes.

Each dataset is a randomly generated integer dataset
consisting of a sorted sequence of \num{190000} integers.
The integers are randomly sampled from a uniform distribution with the range~\(\left[0, 10^6\right]\).
The task is to predict the index of an integer (key) in the sorted sequence.

We build an RMI for each dataset.
Each RMI consists of two stages.
The first stage contains one neural network. 
The second stage contains several linear regression models. 
In our case, the second stage contains either ten or~\num{304} models.
Each dataset is first split into several disjoint blocks, 
one for each second-stage model.
Now, the first-stage network is trained to predict the block an integer key belongs to.
The purpose of this model is to select a model from the second stage.
Each model of the second stage is responsible for resolving the keys in a block 
to the position of the key in the sorted sequence.
The architectures of the models are given in Table~\ref{tab:network-architectures}.

We train the first-stage model to minimise the Mean Square Error (MSE) between the 
model predictions and the true blocks.
Training uses Adam~\citep{KingmaBa2015} with a learning
rate of~\num{0.01} and a mini-batch size of~\num{512}.
For an RMI with a second-stage size of ten, we train for one epoch.
For the larger second-stage size of~\num{304}, we train for six epochs.

Minimising the MSE between the positions a second-stage model predicts 
and the true positions can be solved analytically.
We use the analytic solution for training the second-stage models.
In Section~\ref{sec:experiments-linear}, we compare with
Ouroboros~\citep{TanZhuGuo2021} that also uses the analytic solution. 
SpecRepair~\citep{BauerMarquartBoetiusLeueEtAl2021} can not make use of the analytic solution.
Instead, it repairs second-stage models using gradient descent 
with a learning rate of~\(10^{-13}\), running for~\num{150} iterations.

The specifications for the RMIs are error bounds on the predictions of each model.
For a first-stage neural network, the specification is that it may not deviate 
by more than one valid block from the true block. 
% When this specification is satisfied, any stage two model is only invoked
% for the keys in it's target block and the adjacent valid blocks. 
The specification for a second-stage model consists of one property for each key 
in the target block and one for all other keys that the first stage
assigns to the second-stage model. 
The property for a key~\(k_i\) specifies that the prediction for all keys 
between the previous key~\(k_{i-1}\) and the next key~\(k_{i+1}\) in the dataset
may not deviate by more than~\(\varepsilon\) from the position for~\(k_i\).
We use two sets of specifications, one with~\(\varepsilon = 100\)
and one with~\(\varepsilon = 150\).
According to~\citet{TanZhuGuo2021}, the specifications express 
a guaranteed error bound for looking up both existing and non-existing keys. 
The formal definitions of the specifications are given in Appendix~\ref{sec:specs-rmi}.

As the original Ouroboros implementation is not publicly available, we
reimplement Ouroboros, including creating integer dataset RMIs.
To roughly estimate whether our reimplementation is faithful, 
we can examine the average size of the specifications for
the second-stage models for the RMIs with second-stage size ten.
As these specifications include keys that are wrongly assigned to 
a second-stage model, they can serve for quantifying the accuracy of
the first-stage models.
The specifications that we obtain for these models have a similar average
size as reported by~\citet{TanZhuGuo2021} (\num{19426} properties).
This indicates that our reimplementation is faithful.

\subsection{Repair Algorithms}
Except for Section~\ref{sec:experiments-linear}, we exclusively use
the SpecRepair counterexample-removal 
algorithm~\citep{BauerMarquartBoetiusLeueEtAl2021} 
in our experiments.
In Section~\ref{sec:experiments-linear}, 
we also use the Ouroboros~\citep{TanZhuGuo2021} repair algorithm
and the novel Quadratic Programming repair algorithm.
Additionally, we leverage different falsifiers for repair.
This section introduces these tools.

Both Ouroboros and SpecRepair are counterexample-guided repair algorithms.
Ouroboros performs repair by augmenting the training set with counterexamples
and retraining the linear regression models using an analytic solution.
SpecRepair uses the~\(L_1\) penalty function method~\citep{NocedalWright2006}.
We use SpecRepair with a decreased initial penalty weight of~\(2^{-4}\)
and a satisfaction constant of~\(10^{-4}\).
We do not limit the number of repair steps a repair algorithm may perform,
except for Section~\ref{sec:experiments-linear}.
Here, we perform at most two repair steps for SpecRepair. 
For Ouroboros, we perform up to five repair steps following~\citet{TanZhuGuo2021}.

The Quadratic Programming repair algorithm from 
Section~\ref{sec:experiments-linear} is exact. 
That is, we obtain an infeasible problem if and only if the 
linear regression model can not satisfy
the specification and otherwise obtain the optimal repaired regression model.
To mitigate floating point issues, 
we require the satisfaction function to be at least~\(10^{-2}\) 
in Equation~\eqref{eqn:r} instead of requiring it to be just non-negative.
That corresponds to applying a satisfaction constant 
as in SpecRepair~\citep{BauerMarquartBoetiusLeueEtAl2021}.

We run SpecAttack~\citet{BauerMarquartBoetiusLeueEtAl2021} using 
Sequential Least SQuares Programming (SLSQP) as 
network gradients are available.
Following~\citet{CarliniWagner2017}, we run 
BIM~\citep{KurakinGoodfellowBengio2016,MadryMakelovSchmidtEtAl2018}
with Adam~\citep{KingmaBa2015} as optimiser.
BIM performs local optimisation ten times from different random starting points.

\subsection{Implementation and Hardware}\label{sec:experiment-design-impl-hardware}
We build upon SpecRepair~\citep{BauerMarquartBoetiusLeueEtAl2021} for our experiments,
leveraging the modified ERAN.\@
SpecRepair and ERAN are implemented in Python.
SpecRepair is based on PyTorch~\citep{PaszkeGrossMassaEtAl2019}. 
For repairing linear regression models, we also use an 
ERAN-based Python reimplementation of Ouroboros~\citep{TanZhuGuo2021}.
The original Ouroboros implementation is not publicly available.
The quadratic programming repair algorithm for linear regression models
is implemented in Python and leverages Gurobi~\citep{GurobiOptimization2021}.

All experiments were conducted on Ubuntu 2022.04.1 LTS machines using Python 3.8.
The ACAS Xu, CollisionDetection and Integer Dataset RMI experiments were run on a 
compute server with an Intel Xeon E5--2580 v4 2.4GHz CPU (28 cores) and 1008GB of memory.
The MNIST experiments were run on a GPU compute server 
with an AMD Ryzen Threadripper 3960X 24-Core Processor and 252GB of memory, 
utilising an NVIDIA RTX A6000 GPU with 48GB of memory.

We limit the execution time for repairing each ACAS Xu network and 
each MNIST specification to three hours.
For CollisionDetection and the Integer Dataset RMIs, we use a shorter timeout of one hour.
Except for ACAS Xu, whenever we report runtimes, we repeat all experiments ten times 
and report the median runtime from these runs.
This way, we obtain more accurate runtime measurements that are necessary
for interpreting runtime differences below one minute.
For ACAS Xu, the runtime differences are sufficiently large for all but one network,
so that we can faithfully compare different counterexample searchers 
without repeating the experiments.

\section{Specifications}\label{sec:specs}
In this section, we formally define the specifications used throughout this paper.
% We begin with \(L_\infty\) local robustness, also known as adversarial robustness.

\subsection[L infinity Adversarial Robustness]{\(\mathbf{L}_{\boldsymbol{\infty}}\) Adversarial Robustness}\label{sec:specs-linf-robustness}
Adversarial robustness is a specification for classifiers capturing that small
perturbations of an input may not change the classification.
For~\(L_\infty\) adversarial robustness, the input set is
an \(L_\infty\) ball (a hyper-rectangle).

Assume the input space has~\(n\) dimensions and there 
are~\(m\) classes.
Furthermore, assume the classifier produces a score for each class
so that the classifier has~\(m\) outputs.
Also, assume the classification is the class with the largest score.
Let~\(\mathcal{D} \subset \Reals^n \times \{1, \ldots, m\}\) be the set of labelled
inputs for which we want to specify adversarial robustness.
Then, the~\(L_\infty\) adversarial robustness specification with radius~\(\varepsilon\) is
\begin{subequations}
  \begin{align}
    \Phi &= {\{\varphi(\vec{x}, c) \mid (\vec{x}, c) \in \mathcal{D}\}} \\
    \varphi(\vec{x}, c) &= \left(
      \left\{
        \vec{x}' \mid 
        \vec{x}' \in \Reals^n, 
        {\lVert \vec{x}' - \vec{x} \rVert}_\infty \leq \varepsilon 
      \right\},
      \left\{
        \vec{y} \;\left\vert\;
        \vec{y} \in \Reals^m, 
        \bigwedge_{i=1}^m \vec{y}_i \leq \vec{y}_c
      \right.\right\}
    \right).
  \end{align}
\end{subequations}
% The output set is a convex polytope as it is defined by a conjunction of linear inequalities. 
The SpecRepair satisfaction function~\citep{BauerMarquartBoetiusLeueEtAl2021} for a
property~\(\varphi(\vec{x}, c)\) is
\begin{equation}
  \fSAT(\vec{y}) = \min_{i=1}^m \vec{y}_c - \vec{y}_i.\label{eqn:spec-robust-fsat-specrepair}
\end{equation}
Equivalently, we can split up each property into several properties with just one
linear constraint, yielding a linear specification.
We describe this in Section~\ref{sec:specs-rmi}.

\subsection[ACAS Xu phi 2]{ACAS Xu \(\boldsymbol{\phi}_\mathbf{2}\)}\label{sec:specs-acasxu}
This safety specification consists of a single property.
The property~\(\phi_2\) of~\citet{KatzBarrettDillEtAl2017} is
\begin{equation}
  \phi_2 = \left(
    [55947.691, \infty] \times \Reals^2 \times [1145, \infty] \times [-\infty, 60], 
    \left\{
      \vec{y} \;\left\vert\; \vec{y} \in \Reals^5, \bigvee_{i=1}^5 \vec{y}_1 \leq \vec{y}_i
    \right.\right\}
  \right).
\end{equation}
The output set expresses that Clear-of-Conflict is not the maximal score,
or, in other words, is not least advised.
The input set of this property is unbounded, but each ACAS Xu network has a
bounded input domain.
Intersected with one of the input domains, we obtain a closed input set for the property.
The SpecRepair satisfaction function 
for~\(\phi_2\) is
\begin{equation}
  \fSAT(\vec{y}) = \max_{i=1}^m \vec{y}_i - \vec{y}_1.
\end{equation}

\subsection{Integer Dataset RMI Error Bounds}\label{sec:specs-rmi}
For an RMI, the specification of a first-stage model is that it may not deviate
by more than one valid block from the true block a key resides in.
Let~\([l_1, u_1], [l_2, u_2], \ldots, [l_K, u_K]\) be the blocks of the RMI,
where~\(K \in \{10, 304\}\) is the number of blocks.
Then, the specification of the first-stage model is
\begin{subequations}
  \begin{align}
    \Phi &= {\{\varphi_i\}}_{i=1}^{K} \\
    \varphi_i &= \left(
      [l_i, u_i], 
      [\max(1, i-1), \min(i+1, K)]
    \right)\quad \forall i \in \{1, \ldots, K\}.
  \end{align}
\end{subequations}

The specification of a second-stage model contains one property for each key~\(k_i\) 
that is in the model's target block or is assigned to the model by the first-stage model.
Let~\(\mathcal{K} \subset \Nats\) be the indices of keys that are in the model's block
or assigned to it.
Each property expresses that the predictions for all keys between the 
previous key~\(k_{i-1}\) and the next key~\(k_{i+1}\) deviate by at most~\(\varepsilon \in \{100, 150\}\) 
from the true position~\(p_i\) of~\(k_i\).
When there is no previous or next key in the dataset, we use the key itself as bound.
Therefore,~\(k_{0} = k_{1}\) and~\(k_{\num{190001}} = k_{\num{190000}}\).
Now, the second-stage specification is
\begin{subequations}
  \begin{align}
    \Phi &= {\{\varphi_i\}}_{i \in \mathcal{K}} \\
    \varphi_i &= \left([\min(k_{i-1}+1, k_i), \max(k_{i+1}-1, k_i)], [p_i - \varepsilon, p_i + \varepsilon]\right).
  \end{align}
\end{subequations}

The output sets of these properties are hyper-rectangles.
In the SpecRepair property format, one-dimensional hyper-rectangles
correspond to a conjunction of two linear constraints.
A conjunction of multiple linear constraints does not yield an affine satisfaction function,
as SpecRepair uses a minimum to encode conjunctions.
This is illustrated by Equation~\eqref{eqn:spec-robust-fsat-specrepair}.
To obtain a linear specification, we can split each property into two properties, such that each
property only has one linear constraint. 
Using this alternative formulation, the specification of a second-stage model is
\begin{subequations}\label{eqn:rmi-second-stage-spec-2}
  \begin{align}
    \Phi &= {\{\varphi_i\}}_{i \in \mathcal{K}} \cup {\{\psi_i\}}_{i \in \mathcal{K}}\\
    \varphi_i &= \left([\min(k_{i-1}+1, k_i), \max(k_{i+1}-1, k_i)], \{\vec{y} \mid \vec{y} \in \Reals, \vec{y} \geq p_i - \varepsilon\}\right) \\
    \psi_i &= \left([\min(k_{i-1}+1, k_i), \max(k_{i+1}-1, k_i)], \{\vec{y} \mid \vec{y} \in \Reals, \vec{y} \leq p_i + \varepsilon\}\right).
  \end{align}
\end{subequations}
This formulation yields the affine SpecRepair satisfaction 
functions~\(\fSAT(\vec{y}) = \vec{y} - p_i - \varepsilon\) for the property~\(\varphi_i\)
and~\(\fSAT(\vec{y}) = p_i + \varepsilon - \vec{y}\) for~\(\psi_i\).
As the input set of each property is a hyper-rectangle,~\(\Phi\)
from Equation~\eqref{eqn:rmi-second-stage-spec-2} is a linear
specification.

\section{Additional Experimental Results}\label{sec:experiments-extra}
In this section, we report additional experimental results.
This includes additional results on using falsifiers for repair
and an overview of the success rates and repaired network performance
when using different verifiers and falsifiers for repair.
For comparison with earlier work, we report our full ACAS Xu results
in Section~\ref{sec:experiments-extra-acasxu}.

\textbf{A Note on Failing Repairs}\quad
We witness several failing repairs in our experiments. 
These are either due to timeout or due to failing counterexample-removal.
There are no indications of non-termination regarding Algorithm~\ref{algo:cgr}
itself in these failing repairs.
In other words, we do not observe exceedingly high repair step counts.
This holds true both for the optimal verifier, for which termination remains an open question,
and the early-exit verifier, for which we disprove termination in 
Section~\ref{sec:theory-early-exit-verifiers}.

\subsection{Using Falsifiers for Repair}\label{sec:experiments-extra-falsifiers}
\begin{figure}[tb]
  \centering
  \begin{subfigure}{0.5\linewidth}
    \centering
    \includegraphics[width=0.95\linewidth]{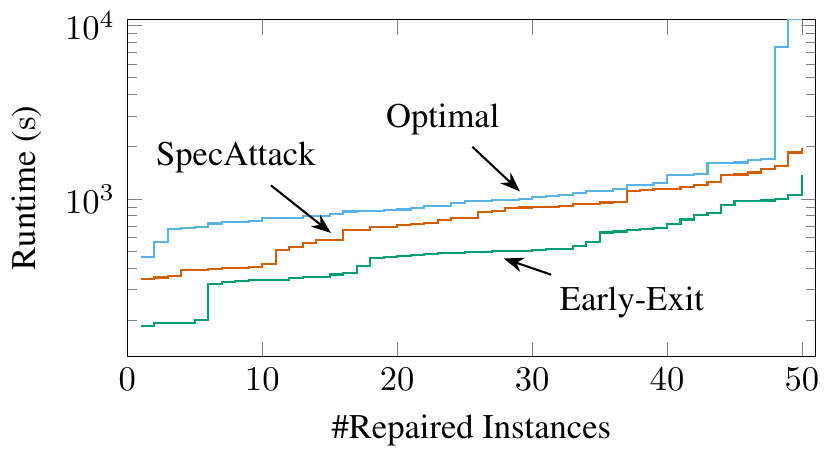}
    \caption{MNIST}
  \end{subfigure}%
  \begin{subfigure}{0.5\linewidth}
    \centering
    \includegraphics[width=0.95\linewidth]{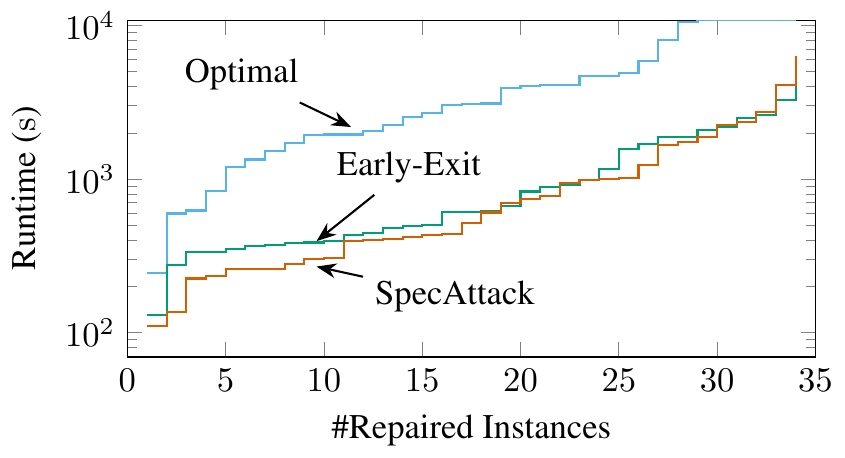}
    \caption{ACAS Xu}
  \end{subfigure}
  \caption[Repair using Falsifiers]{%
    Repair using Falsifiers.
    We plot the number of repaired instances that individually
    require less than a certain runtime.
    We plot this for repair using
    %BIM~\includegraphics{main-figure_crossref6.pdf},
    SpecAttack~\includegraphics{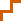}, 
    only the optimal verifier~\includegraphics{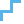}
    and only the early-exit
    verifier~\includegraphics{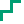}.
    Both experiments use a timeout of three hours.
    Runtimes are given on a logarithmic scale.
  }\label{fig:falsifiers-cactus}
\end{figure}
In this section, we report additional results on repair using falsifiers.
To study the advantages of falsifiers for repair, we repair an MNIST network and the 
ACAS Xu networks using the SpecAttack~\citep{BauerMarquartBoetiusLeueEtAl2021} and
BIM~\citep{KurakinGoodfellowBengio2016,MadryMakelovSchmidtEtAl2018} falsifiers. 
We outline the approach of the BIM falsifier 
in Section~\ref{sec:nn-verif}.
We start repair by searching counterexamples using one of the falsifiers.
Only when the falsifier fails to produce further counterexamples we
turn to the early-exit verifier. 
Ideally, we would want that the verifier is invoked only once to prove 
specification satisfaction.
Practically, often several additional repair steps have to be performed using the verifier. 

For ACAS Xu, we observe that BIM generally fails to find counterexamples.
Therefore, we only report using SpecAttack for ACAS Xu.
For the small CollisionDetection and Integer Dataset networks, the verifier is
already comparably fast, so neither BIM nor SpecAttack can provide a runtime advantage. 
We also evaluated combining falsifiers with the optimal verifier, but this does not
improve upon using the early-exit verifier. 

%\paragraph{SpecAttack}
\textbf{SpecAttack}\quad
Figure~\ref{fig:falsifiers-cactus} summarises our results for repairing the MNIST network
and the ACAS Xu networks using SpecAttack.
For the MNIST network, using SpecAttack is inferior to using only the early-exit 
verifier. 
In our experiments, SpecAttack provides no significant runtime advantage for generating 
counterexamples over the early-exit verifier and tends to compute
counterexamples with a smaller violation.
SpecAttack's runtime scales well with the network size but exponentially in the input dimension.
Thus, it is not surprising that it provides no advantage for our 
MNIST network, which is tiny compared
to state-of-the-art image classification networks.

For ACAS Xu, we would expect that SpecAttack outperforms using only the early-exit verifier
more clearly than apparent from Figure~\ref{fig:falsifiers-cactus}.
Here, SpecAttack's runtime is an order of magnitude faster than the runtime of the early-exit verifier.
SpecAttack can also provide an advantage in repair steps in many cases.
However, at times using SpecAttack also increases the number of repair steps.
Additionally, SpecAttack sometimes makes the final invocations of the early-exit verifier 
more costly than when only the verifier is used.
% These mixed outcomes make SpecAttack perform unexpectedly unconvincing in our experiments.

%\paragraph{BIM}
\textbf{Additional Results on BIM}\quad
The results of our experiments using BIM on MNIST are summarised in 
Figure~\ref{fig:falsifiers-cactus-mnist}.
We already discussed that BIM can significantly accelerate repair. 
For MNIST, repair using BIM is the fastest method in \SI{70}{\percent} of the repair cases, 
compared to~\SI{26}{\percent} for only the early-exit verifier,~\SI{2}{\percent}~for SpecAttack
and~\SI{0}{\percent} for only the optimal verifier.
In~\SI{2}{\percent} of the cases, the runtime of the two best variants is within \num{30} seconds. 
%
% BIM is an order of magnitude faster than the early-exit verifier, yet it can find
% counterexamples with a larger violation than the early-exit verifier.
% Thus, BIM can sometimes provide the repair step advantage of the optimal verifier
% at a much smaller cost. 
The breakdown of which method is the fastest for each repair case shows that 
the picture is not as clear as we may wish it to be~---~BIM provides 
a significant runtime advantage in~\SI{70}{\percent} of the cases, 
but in~\SI{26}{\percent} of the cases using 
only the early-exit verifier is faster.

Our experiments using falsifiers demonstrate that they can give a substantial
runtime advantage to repair, but they also show that speeding up repair 
traces back to more intricate properties beyond just falsifier speed.
Understanding these properties better is a promising future research direction for designing
better falsifiers for repair.

\subsection{Success Rates and Repaired Network Performance}
\begin{table}[tb]
  \caption[Experiments: Success Rates]{%
    Experiments: Success Rates.
    The success rates when repairing the ACAS Xu, MNIST, and
    CollisionDetection networks and integer dataset RMI
    first-stage models
    using the different verifiers and falsifiers.
    For ACAS Xu, BIM is not included as it 
    is unable to discover counterexamples for these networks.
  }\label{tab:success-rates}
  \centering
  \begin{tabular}{lcccc}
                     & \multicolumn{4}{c}{\textbf{Success Rate}} \\
    \textbf{Dataset} & \textbf{Optimal}
                     & \textbf{Early-Exit}
                     & \textbf{BIM}
                     & \textbf{SpecAttack} \\
    \midrule
    ACAS Xu            & \SI{82.5}{\percent} & \SI{100.0}{\percent} 
                       & -- & \SI{100.0}{\percent} \\
    MNIST              & \SI{96.0}{\percent}   & \SI{100.0}{\percent} 
                       & \SI{100.0}{\percent} & \SI{100.0}{\percent} \\
    Integer Datasets   & \SI{92.0}{\percent}   & \SI{92.0}{\percent}
                       & \SI{94.0}{\percent} & \SI{90.0}{\percent} \\
    CollisionDetection & \SI{90.0}{\percent} & \SI{90.0}{\percent} 
                       & \SI{89.0}{\percent} & \SI{88.0}{\percent}
  \end{tabular}
\end{table}
\begin{table}
  \caption[Experiments: Median Accuracy]{%
    Experiments: Median Accuracy.
    The median repaired network accuracy when repairing the ACAS Xu, MNIST,
    and CollisionDetection networks and integer dataset RMI
    first-stage models
    using the different verifiers and falsifiers.
    We report the test set accuracy for MNIST and CollisionDetection
    and the training set accuracy for the integer dataset RMIs.
    For ACAS Xu, we report the accuracy for 
    recreating the predictions of the original network for
    a large grid of inputs, as described in 
    Appendix~\ref{sec:experiments-design-datasets-etc}.
    We report the median accuracy among the cases where 
    repair is successful for all verifiers and falsifiers.
    For ACAS Xu, BIM is not included as it 
    is unable to discover counterexamples for these networks.
  }\label{tab:median-accuracies}
  \centering
  \begin{tabular}{lcccc}
                     & \multicolumn{4}{c}{\textbf{Median Accuracy}} \\
    \textbf{Dataset} & \textbf{Optimal}
                     & \textbf{Early-Exit}
                     & \textbf{BIM}
                     & \textbf{SpecAttack} \\
    \midrule
    ACAS Xu            & \SI{99.6}{\percent} & \SI{99.6}{\percent} 
                       & -- & \SI{99.6}{\percent} \\
    MNIST              & \SI{97.4}{\percent} & \SI{97.5}{\percent}
                       & \SI{97.4}{\percent} & \SI{97.5}{\percent} \\
    Integer Datasets   & \SI{90.9}{\percent} & \SI{90.9}{\percent}
                       & \SI{90.0}{\percent} & \SI{91.0}{\percent} \\
    CollisionDetection & \SI{89.8}{\percent} & \SI{90.2}{\percent}
                       & \SI{90.0}{\percent} & \SI{89.9}{\percent}
  \end{tabular}
\end{table}
Table~\ref{tab:success-rates} summarises the success rates of
repairing MNIST, ACAS Xu, and CollisionDetection networks
and Integer Dataset RMI first-stage models using different 
verifiers and falsifiers.
For the large MNIST and ACAS Xu networks, the early-exit verifier
enables repair in some cases where repair using the optimal verifier
fails due to timeout.
Regarding the use of falsifiers, there are minor variations for 
the Integer Dataset RMIs and CollisionDetection. 
These differences are due to failing repairs.
Here, the counterexample removal procedure is unable to remove
all counterexamples provided by, for example, the optimal verifier,
while it succeeds for another set of counterexamples.

Table~\ref{tab:median-accuracies} summarises the performance of the 
repaired networks.
We only observe minimal variations regarding the performance. 
Using the early-exit verifier slightly outperforms the optimal verifiers
on MNIST and CollisionDetection. 
The impact of BIM and SpecAttack is inconsistent across datasets.
We recommend fine-tuning the initial penalty weight to the counterexample
violation magnitude instead of using
a different counterexample searcher to increase repaired network performance.

\subsection{ACAS Xu}\label{sec:experiments-extra-acasxu}
\begin{table}[tb]
    \caption[Detailed ACAS Xu Results]{%
      Detailed ACAS Xu Results.
      Opt.\ and E.E denote repair using the optimal and the 
      early-exit verifier, respectively.
      Sp.A.\ denotes repair using SpecAttack and the early-exit verifier.
      The symbol~\exsuccess{} denotes successful repair, 
      while~\extimeout{} denotes timeout.
      Both accuracy and Mean Absolute Error (MAE) compare the
      predictions of the repaired network with the predictions
      of the initial faulty network for a large grid
      of inputs.
      More details on this are provided in 
      Appendix~\ref{sec:experiments-design-datasets-etc}.
    }\label{tab:acasxu-full}
    \centering
\begin{tabular}{llccccccccc}
         &       & \multicolumn{3}{c}{\textbf{Status}} & \multicolumn{3}{c}{\textbf{Accuracy}} & \multicolumn{3}{c}{\textbf{MAE}} \\
\midrule
\textbf{Spec.} & \textbf{Model} 
& \textbf{Opt.} & \textbf{E.E.} & \textbf{Sp.A.} 
& \textbf{Opt.} & \textbf{E.E.} & \textbf{Sp.A.} 
& \textbf{Opt.} & \textbf{E.E.} & \textbf{Sp.A.} \\
\midrule
\(\phi_2\) & \(N_{2,1}\) & \exsuccess & \exsuccess & \exsuccess & \SI{99.6}{\percent} & \SI{99.4}{\percent} & \SI{99.6}{\percent} & \num{  0.10} & \num{  0.15} & \num{  0.11} \\
\(\phi_2\) & \(N_{2,2}\) & \extimeout & \exsuccess & \exsuccess &  --  & \SI{99.2}{\percent} & \SI{99.1}{\percent} &    --  & \num{  0.18} & \num{  0.22} \\
\(\phi_2\) & \(N_{2,3}\) & \exsuccess & \exsuccess & \exsuccess & \SI{99.7}{\percent} & \SI{99.7}{\percent} & \SI{99.7}{\percent} & \num{  0.09} & \num{  0.10} & \num{  0.11} \\
\(\phi_2\) & \(N_{2,4}\) & \exsuccess & \exsuccess & \exsuccess & \SI{99.7}{\percent} & \SI{99.5}{\percent} & \SI{99.8}{\percent} & \num{  0.08} & \num{  0.11} & \num{  0.08} \\
\(\phi_2\) & \(N_{2,5}\) & \extimeout & \exsuccess & \exsuccess &  --  & \SI{99.4}{\percent} & \SI{99.3}{\percent} &    --  & \num{  0.11} & \num{  0.11} \\
\(\phi_2\) & \(N_{2,6}\) & \exsuccess & \exsuccess & \exsuccess & \SI{99.6}{\percent} & \SI{99.5}{\percent} & \SI{99.5}{\percent} & \num{  0.12} & \num{  0.11} & \num{  0.10} \\
\(\phi_2\) & \(N_{2,7}\) & \extimeout & \exsuccess & \exsuccess &  --  & \SI{99.5}{\percent} & \SI{14.5}{\percent} &    --  & \num{  0.17} & \num{  0.16} \\
\(\phi_2\) & \(N_{2,8}\) & \exsuccess & \exsuccess & \exsuccess & \SI{99.6}{\percent} & \SI{99.6}{\percent} & \SI{99.7}{\percent} & \num{  0.12} & \num{  0.15} & \num{  0.13} \\
\(\phi_2\) & \(N_{2,9}\) & \exsuccess & \exsuccess & \exsuccess & \SI{99.9}{\percent} & \SI{99.8}{\percent} & \SI{99.8}{\percent} & \num{  0.17} & \num{  0.14} & \num{  0.14} \\
\(\phi_2\) & \(N_{3,1}\) & \exsuccess & \exsuccess & \exsuccess & \SI{99.0}{\percent} & \SI{99.3}{\percent} & \SI{99.4}{\percent} & \num{  0.22} & \num{  0.14} & \num{  0.17} \\
\(\phi_2\) & \(N_{3,2}\) & \exsuccess & \exsuccess & \exsuccess & \SI{99.9}{\percent} & \SI{99.8}{\percent} & \SI{99.8}{\percent} & \num{  0.08} & \num{  0.09} & \num{  0.13} \\
\(\phi_2\) & \(N_{3,4}\) & \exsuccess & \exsuccess & \exsuccess & \SI{99.6}{\percent} & \SI{99.6}{\percent} & \SI{99.6}{\percent} & \num{  0.11} & \num{  0.13} & \num{  0.08} \\
\(\phi_2\) & \(N_{3,5}\) & \exsuccess & \exsuccess & \exsuccess & \SI{99.5}{\percent} & \SI{99.5}{\percent} & \SI{99.5}{\percent} & \num{  0.08} & \num{  0.10} & \num{  0.08} \\
\(\phi_2\) & \(N_{3,6}\) & \exsuccess & \exsuccess & \exsuccess & \SI{99.8}{\percent} & \SI{99.8}{\percent} & \SI{99.7}{\percent} & \num{  0.05} & \num{  0.06} & \num{  0.06} \\
\(\phi_2\) & \(N_{3,7}\) & \exsuccess & \exsuccess & \exsuccess & \SI{99.6}{\percent} & \SI{99.6}{\percent} & \SI{99.7}{\percent} & \num{  0.09} & \num{  0.08} & \num{  0.08} \\
\(\phi_2\) & \(N_{3,8}\) & \exsuccess & \exsuccess & \exsuccess & \SI{99.6}{\percent} & \SI{99.6}{\percent} & \SI{99.7}{\percent} & \num{  0.13} & \num{  0.10} & \num{  0.11} \\
\(\phi_2\) & \(N_{3,9}\) & \extimeout & \exsuccess & \exsuccess &  --  & \SI{97.5}{\percent} & \SI{97.5}{\percent} &    --  & \num{  0.19} & \num{  0.18} \\
\(\phi_2\) & \(N_{4,1}\) & \exsuccess & \exsuccess & \exsuccess & \SI{99.8}{\percent} & \SI{99.8}{\percent} & \SI{99.8}{\percent} & \num{  0.10} & \num{  0.07} & \num{  0.10} \\
\(\phi_2\) & \(N_{4,3}\) & \exsuccess & \exsuccess & \exsuccess & \SI{99.6}{\percent} & \SI{99.6}{\percent} & \SI{99.5}{\percent} & \num{  0.09} & \num{  0.08} & \num{  0.13} \\
\(\phi_2\) & \(N_{4,4}\) & \exsuccess & \exsuccess & \exsuccess & \SI{99.7}{\percent} & \SI{99.6}{\percent} & \SI{99.7}{\percent} & \num{  0.11} & \num{  0.15} & \num{  0.09} \\
\(\phi_2\) & \(N_{4,5}\) & \exsuccess & \exsuccess & \exsuccess & \SI{99.6}{\percent} & \SI{99.5}{\percent} & \SI{99.5}{\percent} & \num{  0.09} & \num{  0.07} & \num{  0.12} \\
\(\phi_2\) & \(N_{4,6}\) & \extimeout & \exsuccess & \exsuccess &  --  & \SI{99.5}{\percent} & \SI{99.6}{\percent} &    --  & \num{  0.13} & \num{  0.11} \\
\(\phi_2\) & \(N_{4,7}\) & \extimeout & \exsuccess & \exsuccess &  --  & \SI{99.2}{\percent} & \SI{99.2}{\percent} &    --  & \num{  0.17} & \num{  0.18} \\
\(\phi_2\) & \(N_{4,8}\) & \exsuccess & \exsuccess & \exsuccess & \SI{99.4}{\percent} & \SI{99.3}{\percent} & \SI{99.5}{\percent} & \num{  0.08} & \num{  0.08} & \num{  0.09} \\
\(\phi_2\) & \(N_{4,9}\) & \exsuccess & \exsuccess & \exsuccess & \SI{96.5}{\percent} & \SI{96.4}{\percent} & \SI{96.5}{\percent} & \num{  0.16} & \num{  0.14} & \num{  0.13} \\
\(\phi_2\) & \(N_{5,1}\) & \exsuccess & \exsuccess & \exsuccess & \SI{99.7}{\percent} & \SI{99.5}{\percent} & \SI{99.6}{\percent} & \num{  0.12} & \num{  0.13} & \num{  0.09} \\
\(\phi_2\) & \(N_{5,2}\) & \exsuccess & \exsuccess & \exsuccess & \SI{99.7}{\percent} & \SI{99.7}{\percent} & \SI{99.7}{\percent} & \num{  0.08} & \num{  0.07} & \num{  0.08} \\
\(\phi_2\) & \(N_{5,3}\) & \exsuccess & \exsuccess & \exsuccess & \SI{99.9}{\percent} & \SI{99.9}{\percent} & \SI{99.9}{\percent} & \num{  0.04} & \num{  0.04} & \num{  0.04} \\
\(\phi_2\) & \(N_{5,4}\) & \exsuccess & \exsuccess & \exsuccess & \SI{99.7}{\percent} & \SI{99.7}{\percent} & \SI{99.7}{\percent} & \num{  0.10} & \num{  0.07} & \num{  0.10} \\
\(\phi_2\) & \(N_{5,5}\) & \exsuccess & \exsuccess & \exsuccess & \SI{99.8}{\percent} & \SI{99.7}{\percent} & \SI{99.8}{\percent} & \num{  0.10} & \num{  0.13} & \num{  0.11} \\
\(\phi_2\) & \(N_{5,6}\) & \exsuccess & \exsuccess & \exsuccess & \SI{99.6}{\percent} & \SI{99.5}{\percent} & \SI{99.5}{\percent} & \num{  0.13} & \num{  0.11} & \num{  0.11} \\
\(\phi_2\) & \(N_{5,7}\) & \exsuccess & \exsuccess & \exsuccess & \SI{99.3}{\percent} & \SI{99.3}{\percent} & \SI{99.5}{\percent} & \num{  0.17} & \num{  0.14} & \num{  0.11} \\
\(\phi_2\) & \(N_{5,8}\) & \exsuccess & \exsuccess & \exsuccess & \SI{99.6}{\percent} & \SI{99.4}{\percent} & \SI{99.5}{\percent} & \num{  0.13} & \num{  0.12} & \num{  0.13} \\
\(\phi_2\) & \(N_{5,9}\) & \exsuccess & \exsuccess & \exsuccess & \SI{99.0}{\percent} & \SI{98.9}{\percent} & \SI{99.1}{\percent} & \num{  0.14} & \num{  0.13} & \num{  0.09} \\
\midrule
           &             & 28 & 34 & 34 & \SI{99.6}{\percent} & \SI{99.5}{\percent} & \SI{99.6}{\percent} & \num{  0.10} & \num{  0.11} & \num{  0.11} \\
% \midrule
           &             & \multicolumn{3}{c}{success frequency} & \multicolumn{3}{c}{median} & \multicolumn{3}{c}{median}
\end{tabular}
\end{table}
For comparison with the earlier work
of~\citet{BauerMarquartBoetiusLeueEtAl2021},
we report our detailed ACAS Xu results in Table~\ref{tab:acasxu-full}.
Due to improvements in the interaction with the verifier, we 
are successful more frequently than any method evaluated 
by~\citet{BauerMarquartBoetiusLeueEtAl2021}. 
At the same time, we maintain the level of repaired network performance.

%%%%%%%%%%%%%%%%%%%%%%%%%%%%%%%%%%%%%%%%%%%%%%%%%%%%%%%%%%%%%%%%%%%%%%%%%%%%%%%
%%%%%%%%%%%%%%%%%%%%%%%%%%%%%%%%%%%%%%%%%%%%%%%%%%%%%%%%%%%%%%%%%%%%%%%%%%%%%%%

\end{document}